\def\eqref#1{equation~\ref{#1}}
\def\Eqref#1{Equation~\ref{#1}}
\def\1{\bm{1}}
\def\rvx{{\mathbf{x}}}
\def\vx{{\bm{x}}}
\DeclareMathAlphabet{\mathsfit}{\encodingdefault}{\sfdefault}{m}{sl}
\SetMathAlphabet{\mathsfit}{bold}{\encodingdefault}{\sfdefault}{bx}{n}
\def\gA{{\mathcal{A}}}
\def\gD{{\mathcal{D}}}
\def\gF{{\mathcal{F}}}
\def\gX{{\mathcal{X}}}
\def\gY{{\mathcal{Y}}}
\newtheorem{thm}{Theorem}[section]
\newtheorem{cor}[thm]{Corollary}
\newtheorem{prop}[thm]{Proposition}
\newtheorem{lem}[thm]{Lemma}
\newtheorem{definition2}[thm]{Definition}
\newcommand{\ma}{\mathcal{A}}
\newcommand{\mx}{\mathcal{X}}
\newcommand{\my}{\mathcal{Y}}
\newcommand{\mb}{\mathcal{B}}
\newcommand{\mc}{\mathcal{C}}
\newcommand{\mt}{\mathcal{T}}
\newcommand{\md}{\mathcal{D}}
\newcommand{\mi}{\mathcal{I}}
\newcommand{\ignore}[1]{}
\newcommand{\prob}[2]{\mathbb{P}_{#1}\left[#2\right]}
\newcommand\numberthis{\addtocounter{equation}{1}\tag{\theequation}}
\newcommand{\topdown}{TopDown}
\icmltitlerunning{On the Optimality of Trees Generated by ID3}
\begin{document}

\twocolumn[
\icmltitle{On the Optimality of Trees Generated by ID3}



\icmlsetsymbol{equal}{*}

\begin{icmlauthorlist}
\icmlauthor{Alon Brutzkus}{tau}
\icmlauthor{Amit Daniely}{huji}
\icmlauthor{Eran Malach}{huji}

\end{icmlauthorlist}

\icmlaffiliation{tau}{The Blavatnik School of Computer Science,	Tel Aviv University, Israel.}
\icmlaffiliation{huji}{School of Computer Science, The Hebrew University, Israel.}

\icmlcorrespondingauthor{Alon Brutzkus}{alonbrutzkus@mail.tau.ac.il}

\icmlkeywords{Machine Learning, ICML}

\vskip 0.3in
]



\printAffiliationsAndNotice{}  

\begin{abstract}
	Since its inception in the 1980s, ID3 has become one of the most successful and widely used algorithms for learning decision trees. However, its theoretical properties remain poorly understood. In this work, we introduce a novel metric of a decision tree algorithm's performance, called mean iteration statistical consistency (MIC), which measures optimality of trees generated by ID3. As opposed to previous metrics, MIC can differentiate between different decision tree algorithms and compare their performance. We provide theoretical and empirical evidence that the TopDown variant of ID3, introduced by Kearns and Mansour (1996), has near-optimal MIC in various settings for learning read-once DNFs under product distributions. In contrast, another widely used variant of ID3 has MIC which is not near-optimal. We show that the MIC analysis predicts well the performance of these algorithms in practice. Our results present a novel view of decision tree algorithms which may lead to better and more practical guarantees for these algorithms.
	
\end{abstract}

\section{Introduction}

Decision tree algorithms are widely used in various learning tasks and competitions. The most popular algorithms, which include ID3 \citep{quinlan1986induction} and its successors C4.5 and CART, use a greedy top-down approach to grow trees. In each iteration, ID3 chooses a leaf and replaces it with an internal node connected to two new leaves. This splitting operation is based on a splitting criterion, which promotes reduction of the training error. The popularity of this algorithm stems from its simplicity, interpretability and good generalization performance.

Despite its success in practice, the theoretical properties of ID3 are not well understood. The main drawback of current theoretical analyses is that they rarely predict performance in practice and do not enable comparison between different algorithms. This is because they either hold for very large trees which are not used in practice \citep{kearns1996boosting}, or rely on generalization upper bounds from which it is difficult to differentiate between algorithms' performance \citep{mcallester1999some}. 


In this work, we analyze the ID3 algorithm with a novel metric which we call mean iteration statistical consistency (MIC). MIC  measures the difference between the test error of trees generated by ID3 to the test error of the optimal tree of the same size, across different iterations. We analyze the MIC of the TopDown algorithm \citep{kearns1999boosting}, a variant of ID3. TopDown is an implementation of ID3, where in each iteration the leaf that it chooses to split is the one with the largest gain reduction weighted by the probability to reach the leaf. We provide theoretical and empirical evidence, that TopDown has near-optimal MIC. 

On the theory side, we prove that TopDown has optimal MIC in two settings. We show this for learning conjunctions under product distributions and learning read-once DNFs with two terms under the uniform distribution. We devise novel dynamic programming algorithms to calculate trees with optimal test error for a large number of settings of learning read-once DNFs under product distribitions. We use this to compute the MIC of TopDown and another variant of ID3, which we call BestFirst \citep{shi2007best} and is widely used in practice. We empirically show that TopDown has near-optimal MIC, while BestFirst does not have near-optimal MIC. We corrobarate our findings and show empirically that TopDown has significantly better test performance than BestFirst in our setting.

Our results show that MIC is a useful measure for comparing decision tree algorithms and predicting their performance in practice. Furthermore, it measures how far a given algorithm is from the best possible decision tree algorithm for a given setting. This is beneficial for understanding if an algorithm can be improved. This demonstrates the merits of the MIC analysis and corroborates the success of ID3 algorithms in practice. We believe that our results can pave the way for better understanding decision tree algorithms and lead to meaningful practical insights.

\section{Related Work}

The ID3 algorithm was introduced by \citet{quinlan1986induction}. There are a few papers which study its theoretical properties. The main difference between our work and previous ones, is that we provide test guarantees for trees of practical size by analyzing trees generated by ID3 in each iteration. In contrast, previous works provide guarantees for ID3 in the cases of building a large tree which implements the target function exactly or a tree with a very large polynomial size.

The work which is most related to ours, is \cite{fiat2004decision}, which show that ID3 can learn read-once DNF and linear functions under the uniform distribution in polynomial time. They also show that ID3 builds the tree with minimal size among all trees that implement the ground-truth function. Their results require ID3 to build a tree which implements the ground-truth exactly. This can result in a very large tree in practice.

Recently, \citet{brutzkus2019id3} use smoothed analysis to show that ID3 can learn $\log(n)$-juntas over $n$ variables under product distributions in polynomial time. Their result requires to build a tree with a large polynomial size that implements the target function exactly. Another related work is \cite{kearns1999boosting}, which introduce the TopDown variant of ID3. They show that TopDown is a boosting algorithm under the assumptions that there is a weak approximation of the target function in each node. To get a test error guarantee of $\epsilon$, there result requires to build a tree with at least $\left(\frac{1}{\epsilon}\right)^{128}$ nodes, which is not practical. 

Other works study learnability of decision trees through algorithms which are different from algorithms used in practice \citep{o2007learning,kalai2008decision,bshouty2003proper,bshouty2005learning,ehrenfeucht1989learning,chen2018beyond} or show hardness results for learning decision trees \citep{rivest1987learning, hancock1996lower}. The statistical consitency of random forest algorithms has been studied in several works \cite{wager2014asymptotic, denil2014narrowing, biau2008consistency, biau2012analysis}.

\section{Preliminaries}
\label{sec:preliminaries}

\underline{\textbf{Distributional Assumptions:}} Let $\mathcal{X} = \{0,1\}^n$ be the domain and $\mathcal{Y} = \{0,1\}$ be the label set. Let $\mathcal{D}$ be a product distribution on $\mathcal{X} \times \mathcal{Y}$ realizable by a read-once DNF. Namely, for $(\rvx,y) \sim \mathcal{D}$ it holds that $\rvx \sim \prod_{i=1}^n{Bernoulli(p_i)}$ where $p_1,...,p_n \in (0,1)$ and $y=f(\vx)$ for a read-once DNF $f:\mx \rightarrow \my$. Recall that a read-once DNF is a DNF where each variable appears at most once, e.g., $f(\rvx) = \left(x_1 \wedge x_2 \wedge \bar{x_3} \right) \vee \left(\bar{x_4} \wedge x_5\right)$.

\underline{\textbf{Decision Trees:}} Let $T$ be any decision tree whose internal nodes are labeled with features $\{x_i\}_{i=1}^n$. For a node in the tree $v$, we let $p_T(v)$ be the probability that a randomly chosen $\rvx$ reaches $v$ in $T$ and let $q_T(v)$ be the probability that $f(\rvx) = 1$ given that $\rvx$ reaches $v$. For convenience, we will usually omit the subscript $T$ from the latter definitions when the tree used is clear from the context. Let $\ell(T)$ be the set of leaves of $T$ and $\mi(T)$ be the set of internal nodes (non-leaves). We assume that each leaf is labeled $1$ if $q(l) \ge \frac{1}{2}$ and $0$ otherwise. If $l \in \ell(T)$ we let $T(l,i)$ be the same as the tree $T$, except that the leaf $l$ is replaced with an internal node labeled by $x_i$ and connected to two leaves $l_0$ and $l_1$. The leaf $l_j$ corresponds to the assignment $x_i = j$ and each leaf is labeled according to the majority label with respect to $\md$ conditioned on reaching the leaf. For a leaf $l \in \ell(T)$, let $F_l$ be the set of features that are not on the path from the root to $l$. Finally, we let $\mt_t$ be the set of all decision trees with at most $t$ internal nodes. 

Let $E(T) = \prob{\md}{T(\rvx) \neq f(\rvx)}$ be the error of the tree $T$. Then it holds that,
$E(T) = \sum_{l\in\ell(T)}{p(l)C(q(l))}$
where $C(x) = \min(x,1-x)$. Let $H(x) = -x\log(x)-(1-x)\log(x)$ be the entropy function, where the $\log$ is base $2$, and define the entropy of $T$ to be $H(T) = \sum_{l\in\ell(T)}{p(l)H(q(l))}$
which satisfies $E(T) \le H(T)$.

We will assume that the algorithms we study can exactly compute probabilities with respect to the distribtion $\mathcal{D}$. For example, they can compute $p(l)$ or $q(l)$ for any leaf. This will be convenient for our analysis of the MIC measure.\footnote{Using finite sample concentration inequalities the probabilities can be computed to a desired accuracy with polynomial-size samples as in \citep{fiat2004decision}.} 

\underline{\textbf{Algorithm:}} 
The ID3 algorithm \citep{quinlan1986induction} generates decision trees in a recursive manner. In each recursive step, it chooses a variable to split a given leaf. The variable that is chosen is the one with the highest information gain. Then, it continues to split the new leaves in a recursive manner.

The \topdown\, algorithm introduced by  \citet{kearns1999boosting} is a variant of ID3. 
The main difference between \topdown\,and ID3 is the choice of the splitting node in each iteration. Instead of recursively growing the tree, in each iteration \topdown\, chooses the leaf which maximally decreases $H(T)$ and therefore hopefully reduces $E(T)$ as well. Formally, in each iteration, it chooses a leaf $l$ and feature $x_i$, where $i \in F_l$, which maximize:

\begin{align*}
\label{eq:weighted_gain}
H(T) - H(T(l,i)) &= p(l)\Big(H(q(l)) \\ &-(1-\tau_i) H(q(l_0)) - \tau_i H(q(l_1))\Big) \numberthis
\end{align*}
where $\tau_i$ is the probability that $x_i = 1$ given that $\rvx$ reaches $l$.
We let $T_t$ be the tree computed by \topdown\, at iteration $t$. The algorithm is given in Figure \ref{fig:topdown}.

\begin{figure}
	\begin{algorithm}[H]
		\caption{$\text{TopDown}_{\md}(t)$}
		\label{alg:id3}
		\begin{algorithmic}
			\STATE Initialize $T$ to be a single leaf labeled by the majority label with respect to $\md$.
			\STATE \textbf{while} $T$ has less than $t$ nodes: 
			\begin{ALC@g}
				\STATE $\Delta_{best} \leftarrow 0$.
				\STATE \textbf{for} each pair $l \in \ell(T)$ and $i \in F_l$:
				\begin{ALC@g}
					\STATE $\Delta \leftarrow H(T) - H(T(l,i))$.
					\STATE \textbf{if} $\Delta \ge \Delta_{best}$ \textbf{then}:
					\begin{ALC@g}
					\STATE $\Delta_{best} \leftarrow \Delta$; $l_{best} \leftarrow l$; $i_{best} \leftarrow i$.
					\end{ALC@g}
				\end{ALC@g}
				\STATE $T \leftarrow T(l_{best},i_{best})$.
			\end{ALC@g}
		\STATE \textbf{return} $T$
		\end{algorithmic}
	\end{algorithm}
	\caption{TopDown algorithm.}
	\label{fig:topdown}
\end{figure}

\section{Mean Iteration Statistical Consistency}
\label{sec:msi}

In this section we define the mean iteration statistical consistency metric (MIC). We consider deterministic decision tree algorithm $\gA$, which given access to a distribution $\gD$ and number $t> 0$, \footnote{Formally, we assume it can compute exact probabilites with respect to the distribution.} returns a tree $\gA(\gD,t)$ with at most $t$ internal nodes, i.e., $\gA(\gD,t) \in \mt_t$. We refer to $t$ as the number of iterations of the algorithm $\gA$.  
We assume that the algorithm is efficient, i.e., runs in polynomial time.
 
For any distrubution $\gD$, with ground-truth DNF $f$ and $t>0$, let $OPT(\gD,t)$ be the minimal test error among all trees with at most $t$ internal nodes, with respect to $\gD$. Let $E(\gA(\gD,t))$ be the test error of $\gA(\gD,t)$ with respect to $\gD$.

\begin{definition2}
	For each $t > 0$ define: $$\epsilon_t = E(\gA(\gD,t)) - OPT(\gD,t)$$ We call the constants $\{\epsilon_t\}_{t > 0}$ the set of MIC constants. 
\end{definition2}

The constant $\epsilon_t$ indicates how much the algorithm $\gA$ is far from optimal after $t$ iterations. Ideally, a good algorithm should have $\epsilon_t$ which is close to 0 for any $t$. To measure this, we introduce the following notion. 

\begin{definition2}
	Let $\{\epsilon_t\}_{t > 0}$ be the set of MIC constants of an algorithm $\gA$ with respect to $\gD$. The mean iteration statistical consistency metric (MIC) of an algorithm $\gA$ with respect to $\gD$, is the function $m_{\gD}^{\gA}$, such that: 
	\begin{equation}
	m_{\gD}^{\gA}(t^*) = \frac{1}{t^*}\sum_{0 < t \le t^*}\epsilon_t
	\end{equation}
	for $t^* > 0$.
\end{definition2}	

The MIC measures the performance of an algorithm $\gA$ on trees of practical size, for suitable $t^*$, and compares it to the best possible trees it can output.

The following theorem shows that obtaining the optimal tree is NP-Hard. The proof is given in the supplementary.

\begin{thm}
	\label{thm:nph}
	 The problem of finding the tree of size $t$ with test error $OPT(\gD,t)$, given access to $\gD$, is NP-Hard. \footnote{We assume that there is an oracle that can compute exact probabilities with respect to $\gD$ for any set in $\gX \times \gY$.}
\end{thm}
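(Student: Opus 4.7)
The plan is to reduce from the classical minimum-size decision tree problem of Hyafil and Rivest, which is NP-complete: given a finite labeled sample $S \subseteq \{0,1\}^n \times \{0,1\}$ and an integer $t$, decide whether there exists a decision tree with at most $t$ internal nodes that is consistent with $S$. This is the natural source of hardness because the optimization we are trying to prove hard is essentially the distributional analogue of the empirical version.

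Given such an instance $(S,t)$, I would construct the distribution $\gD_S$ that is uniform over the $|S|$ labeled points of $S$. Under the oracle model spelled out in the footnote of the theorem, any probability query against $\gD_S$ can be answered in time polynomial in $n$ and $|S|$ by direct enumeration: for a set $A \subseteq \gX \times \gY$ describable by tree-path conditions, count how many points of $S$ lie in $A$ and divide by $|S|$. Since each training example carries mass $1/|S|$, for every decision tree $T$ the test error $E(T)$ with respect to $\gD_S$ equals the empirical misclassification rate of $T$ on $S$, so $OPT(\gD_S,t)=0$ iff $(S,t)$ is a YES instance of Hyafil-Rivest.

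The reduction itself is then immediate: a hypothetical polynomial-time procedure that, on input $(\gD_S,t)$ with oracle access to $\gD_S$, returned a tree $T^*$ attaining $E(T^*)=OPT(\gD_S,t)$ would let us decide Hyafil-Rivest by testing in polynomial time whether $E(T^*)=0$. This contradicts the NP-completeness of Hyafil-Rivest unless $\mathrm{P}=\mathrm{NP}$, giving NP-hardness of the optimization problem in the theorem.

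The subtlety I expect to be the main obstacle is reconciling the reduction with the distributional setup of Section~\ref{sec:preliminaries}, where $\gD$ is restricted to product distributions realizable by a read-once DNF, since the empirical distribution $\gD_S$ need not lie in this class. I would address this by invoking the wording of the theorem and of its footnote, which speak of an arbitrary distribution $\gD$ on $\gX \times \gY$ accessible through an exact-probability oracle, and treat the restricted class of the preliminaries as a modelling choice that applies to the positive (algorithmic) results but not to the lower bound. If instead one insisted on hardness within the product/read-once class, the reduction would need to encode the combinatorial structure of a Hyafil-Rivest instance inside a product marginal and a read-once DNF target, which appears considerably more intricate and would be the genuine technical difficulty of a strengthened version of the statement.
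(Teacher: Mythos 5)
Your reduction is essentially the same as the paper's: both reduce the minimum-size consistent decision tree problem (the paper cites Hancock et al.\ rather than Hyafil--Rivest, but the underlying hardness result is the same) to the distributional problem by taking $\gD$ uniform over the sample $S$, so that test error equals empirical error and $OPT(\gD,t)=0$ exactly when a consistent tree of size at most $t$ exists. Your observation that the reduction necessarily leaves the product/read-once class, and is justified by the oracle-based wording of the theorem and its footnote, is a correct reading of the paper's intent.
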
 


Given this result, a desirable condition for a good learning algorithm is that $m_{\gD}^{\gA}(t^*) \approx 0$ for $t^*$ of practical size. Indeed, in practice the tree size is constrained to avoid overfitting and we should expect the algorithm to have good performance for sufficiently large training sets. The performance in this case is measured by the MIC.
 
MIC has two important properties. First, it allows to compare between algorithms by comparing their MIC. We will use this to compare TopDown with another variant of ID3, BestFirst. Second, if we find an algorithm $\gA$ such that $m_{\gD}^{\gA}(t^*) \approx 0$, then any other decision algorithm will not have significantly better performance. This may save efforts for attempts on improving a given algorithm. We will show that TopDown has near-optimal MIC  and therefore any other decision tree algorithm does not have significantly better performance than TopDown in the  setting we consider.

\ignore{
The MIC measure has two important properties with potential practical applications:
\begin{enumerate}
	\item \textbf{Comparison of algorithms on trees of practical size}. Assume there are two algorithms $\gA_1$ and $\gA_2$ such that for a sufficiently large $t^*$ and a set of distributions $\left\{\gD_i\right\}_i$, $m_{\gD_i}^{\gA_1}(t^*) << m_{\gD_i}^{\gA_2}(t^*)$ for all $i$. Then for a sufficiently large training set, algorithm $\gA_1$ has better performance than $\gA_2$ with trees of practical size. This suggests that algorithm $\gA_1$ should be used over $\gA_2$ for learning distributions in $\left\{\gD_i\right\}_i$.
	\item \textbf{Algorithm that has close to optimal performance should be the method of choice.}
Assume that for all $i$, an algorithm $\gA$ satisfies $m_{\gD_i}^{\gA}(t^*) \approx 0$,  where $t^*$ is sufficiently large and $\left\{\gD_i\right\}_i$ is a set of distributions. Since the problem of finding optimal trees is NP-Hard, this suggests that $\gA$ is a good choice for learning distirbutions in $\left\{\gD_i\right\}_i$ with decision trees and is unlikely to be improved by any other efficient decision tree algorithm.
\end{enumerate}
}

The latter two properties are not provided by statistical consistency \citep{devroye2013probabilistic} or generalization upper bounds \citep{mcallester1999some}. For example, statistical consistency cannot distinguish between variants of ID3 because they are all statistically consistent given that the output trees have unbounded size. \footnote{An algorithm is statistically consistent if the difference between its test error and the bayes-optimal error converges in probability to 0 as the sample size grows to infinity. } Similarly, generalization upper bounds which depend on the tree size cannot distinguish between decision tree algorithms that output trees of the same size.

\section{Conjunctions and Product Distributions}
\label{sec:conjunction}
In this section we provide the first theoretical analysis of MIC. We consider learning a conjunction on $k$ out of $n$ bits with TopDown under a product distribution $\gD$. We will show that $m_{\gD}^{TopDown}(t^*) = 0$ for all $1 \le t^* \le k$. In fact, as the proof will show, this result holds for any variant of ID3. In the next section we will show optimal results for TopDown, which do not hold for other variants of ID3.

\subsection{Setup and Additional Notations} 

Let $J \subset [n]$ be a subset of
indexes such that $|J|=k$. In this section we assume a target function $f_J(\vx) = \bigwedge_{i \in J} x_i$. Note that $f_J$ is realizable by a depth $k$ tree. Let $\mathcal{D}$ be the product distribution on $\mathcal{X} \times \mathcal{Y}$ defined in Section \ref{sec:preliminaries}. We assume, without loss of generality, that $0 < p_1 \le p_2 \le \cdots \le p_n < 1$ and denote $q_i = 1- p_i$. Denote $J = \{i_1,...,i_k\}$ where $p_{i_1} \le p_{i_{2}} \le \cdots \le p_{i_k}$ and define $J_t = \{p_{i_1}, p_{i_2},..., p_{i_t}\}$ for any $1 \le t \le k$ and $J_0 := \emptyset$.

For any tree $T$ let $I_T$ be the set of features that appear in all of its nodes. For simplicity, we define $I_t$ to be the set of features of the tree output by Topdown $T_t$ and let $I_0 := \emptyset$. We say that a binary tree is right-skewed if the left child of each internal node is a leaf with label 0. We denote by $\ma_t$ the set of all right-skewed trees $R \in \mt_t$ such that $I_{R} \subseteq J$.




\ignore{
\begin{proof}
	\begin{enumerate}
		\item Define $f(x) = xH(yx_1) - x_1H(yx)$. Then $f(x_1) = 0$ and by the fact that $H'(x) = -\log\left(\frac{x}{1-x}\right)$ we get for $x_1 \le x \le 1$:
		\begin{align*}
			f'(x) &= -yx_1\log(yx_1) - (1-yx_1)\log(1-yx_1) + yx_1\log\left(\frac{yx}{1-yx}\right) \\ &= -\log(1-yx_1) + yx_1\log\left(\frac{yx(1-yx_1)}{yx_1(1-yx)}\right) > 0
		\end{align*}
		where the last inequality follows since $0 < x_1 \le x$ and $0 < yx_1 < 1$. This completes the proof.
		\item We will consider several cases. If $yx_1 \ge \frac{1}{2}$ then $x_1C(yx_2) \le x_2C(yx_1)$ holds iff $x_1(1-yx_2) \le x_2(1-yx_1)$ which is equivalent to $x_1 \le x_2$. If $yx_2 \le \frac{1}{2}$ then $C(yx_1) = yx_1$ and $C(yx_2) = yx_2$ and the claim holds. Finally, if $yx_1 \le \frac{1}{2} \le yx_2$ then the desired inequality is equivalent to $x_1(1-yx_2) \le x_2yx_1$ which holds since $yx_2 \ge \frac{1}{2}$.
	\end{enumerate}
\end{proof}
}

\subsection{Main Result}

In this section we will provide a partial proof of the following theorem. The remaining details are deferred to the supplementary material.
\begin{thm}
	\label{thm:disj_pop_main}
	Assume that TopDown runs for $1 \le t \le k$ iterations. Then it holds that $\epsilon_t = 0$. Therefore, $m_{\gD}^{TopDown}(t^*) = 0$ for every $1 \le t^* \le k$.
\end{thm}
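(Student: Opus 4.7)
The plan is to characterize the tree $T_t$ produced by TopDown exactly, compute its error, and then show by structural reductions that any $T \in \mt_t$ must have at least this much error. I would begin with an induction on $t$ showing $T_t \in \ma_t$ is right-skewed with right spine testing $x_{i_1},\ldots,x_{i_t}$ in order. At iteration $t+1 \le k$, the candidate splits fall into three classes: (a) splitting any existing left leaf, which has $q(l) = 0$ and therefore zero entropy and zero gain; (b) splitting the rightmost leaf on $x_i$ with $i \notin J$, which by product independence leaves both children's $q$ equal to $q(l)$ and again gives zero gain; and (c) splitting the rightmost leaf on $x_i$ with $i \in J \setminus I_t$, which yields strictly positive gain $p(l)\bigl[H(q(l)) - p_i H(q(l)/p_i)\bigr]$. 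So TopDown chooses an option in (c), and the choice reduces to minimizing $g(p_i) := p_i H(q(l)/p_i)$ over $i \in J \setminus I_t$.

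The key single-variable lemma is that $g(p) = p H(q/p)$ is strictly increasing on $(q,1]$ for fixed $q \in (0,1)$: a direct computation using $H'(r) = \log((1-r)/r)$ gives $g'(p) = -\log(1 - q/p) > 0$. Since every eligible $p_i$ satisfies $p_i \ge q(l) = \prod_{j \in J \setminus I_t} p_j$, the minimizer is $p_{i_{t+1}}$, closing the induction. Because every left leaf on the resulting spine has $q = 0$ and contributes no error,
\[
E(T_t) = \Bigl(\prod_{j=1}^t p_{i_j}\Bigr) \cdot \min\!\Bigl(\prod_{j=t+1}^k p_{i_j},\; 1 - \prod_{j=t+1}^k p_{i_j}\Bigr).
\]

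For the matching lower bound, I would apply two error-nonincreasing reductions to an arbitrary $T^* \in \mt_t$. (i) If $T^*$ has a node labeled by some $x_i$ with $i \notin J$, product independence implies both children see the same conditional distribution; replacing the node by whichever child subtree has smaller error shrinks the tree without increasing error. (ii) If $T^*$ has a node labeled by $x_i$ with $i \in J$, every leaf of its $x_i = 0$ subtree has $q = 0$ and contributes no error, so collapse that subtree to a single $0$-leaf. Iterating yields a right-skewed $T' \in \ma_{t'}$ with $t' \le t$ and $E(T') \le E(T^*)$. Among right-skewed trees of size $t'$ with spine $S \subseteq J$, the error is $\prod_{j \in S} p_j \cdot \min(q_S, 1 - q_S)$ with $q_S = \prod_{j \in J \setminus S} p_j$; a short case analysis on whether $q_{S^*} \ge 1/2$ (where $S^* = \{i_1,\ldots,i_{t'}\}$ simultaneously minimizes $\prod_{j \in S} p_j$ and maximizes $q_S$) shows $E(T_{t'}) \le E(T')$. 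Finally, since each TopDown iteration only decreases error, $E(T_t) \le E(T_{t'}) \le E(T') \le E(T^*) = OPT(\gD,t)$, giving $\epsilon_t = 0$.

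The step most likely to require care is the last case analysis. When $q_{S^*} \ge 1/2$ and TopDown labels its rightmost leaf $1$, one must verify that an alternative spine $S$ with $q_S < 1/2$ still has error $\prod_{j \in J} p_j \ge \prod_{j \in S^*} p_j - \prod_{j \in J} p_j$, which reduces to $2q_{S^*} \ge 1$ and holds by assumption. The regime $q_{S^*} < 1/2$ is immediate since every alternative spine has $q_S \le q_{S^*} < 1/2$ and ties at error $\prod_{j \in J} p_j$. The structural reductions (i), (ii) and the monotonicity lemma for $g$ are then routine.
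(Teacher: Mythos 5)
Your proof is correct and follows the paper's three-part skeleton (Lemmas~\ref{lem:optimal_tree_conj}--\ref{lem:best_left_skewed_tree}): characterize the tree TopDown builds, reduce an optimal tree in $\mt_t$ to a right-skewed one with spine in $J$, then compare spine sets. Your monotonicity claim that $g(p)=pH(q/p)$ is increasing, with $g'(p)=-\log(1-q/p)$, is exactly Lemma~\ref{lem:HC_prop}(1) in disguise: set $q=yx_1x_2$, and $g(x_1)\le g(x_2)$ becomes $x_1H(yx_2)\le x_2H(yx_1)$. Where you genuinely diverge is in the lower bound. The paper's Lemma~\ref{lem:optimal_tree_conj} handles nodes labeled outside $J$ with a rotation (splice the left subtree onto the rightmost leaf, promote the right child, dedup features), and then pads the spine back up to size exactly $t$ using the $C$-version of Lemma~\ref{lem:HC_prop}, so that Lemma~\ref{lem:best_left_skewed_tree} can be applied over spines of size exactly $t$. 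Your reductions are simpler: replace a non-$J$ node by whichever child subtree has smaller conditional error (valid since $x_i$ is independent of $f$ and of the remaining coordinates, so both children see the same conditional distribution), collapse the $x_i=0$ subtree of each $J$-node to a $0$-leaf, and absorb the resulting size drop from $t$ to $t'$ by comparing against $T_{t'}$ and invoking the fact that each TopDown split (by concavity of $C$) cannot increase error. Your final spine comparison is also more elementary than the paper's: a direct case split on whether $q_{S^*}\ge \tfrac12$ replaces the swap argument that the paper builds on Lemma~\ref{lem:HC_prop}(2), so you avoid the $C$-inequality entirely at the cost of a short computation. One small omission: after the reductions you should explicitly discard any repeated feature along a root-to-leaf path (the repeated node carries a probability-zero branch and is vacuous), a cleanup the paper performs at the end of the proof of Lemma~\ref{lem:optimal_tree_conj}.
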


For the proof we will need the following key lemma which is used throughout our analysis. The proof is given in the supplementary material.
\begin{lem}
	\label{lem:HC_prop}
	Let $0<y<1$ and $0 < x_1 \le x_2 \le 1$. Then: 
	\begin{enumerate}
		\item $x_1H(yx_2) \le x_2H(yx_1)$ and this inequality is strict if $x_1 < x_2$.
		\item $x_1C(yx_2) \le x_2C(yx_1)$.
	\end{enumerate}
\end{lem}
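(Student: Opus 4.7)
The plan is to handle the two parts separately, since they involve different functions ($H$ and $C$) and admit different proof strategies.

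For part 1, I would reduce the inequality $x_1 H(yx_2) \le x_2 H(yx_1)$ to a monotonicity statement: dividing by $x_1 x_2 > 0$, it is equivalent to $g(x_2) \le g(x_1)$ where $g(x) := H(yx)/x$. So it suffices to show that $g$ is (strictly) decreasing on $(0, 1/y)$ for fixed $y \in (0,1)$. Computing $g'(x) = [y x H'(yx) - H(yx)]/x^2$, it is enough to verify that $H(z) - z H'(z) > 0$ for $z = yx \in (0,1)$. Using the identity $H'(z) = \log((1-z)/z)$, one obtains
\[
H(z) - z H'(z) = -z\log z - (1-z)\log(1-z) - z\log(1-z) + z\log z = -\log(1-z),
\]
which is strictly positive for $z \in (0,1)$. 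Hence $g'(x) < 0$, which gives both the non-strict inequality and the strict version when $x_1 < x_2$. (Equivalently, one can mimic the commented-out sketch and set $f(x) := x H(yx_1) - x_1 H(yx)$, note $f(x_1) = 0$, and show $f'(x) > 0$ on $[x_1, 1]$; the computation reduces to the same identity.)

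For part 2, I would simply split on the position of $yx_1$ and $yx_2$ relative to $\tfrac12$, since $C$ is piecewise linear. There are three cases (the case $yx_2 \le \tfrac12 \le yx_1$ is impossible because $x_1 \le x_2$). If $yx_2 \le \tfrac12$, both sides equal $y x_1 x_2$ and equality holds. If $yx_1 \ge \tfrac12$, the inequality reduces to $x_1(1-yx_2) \le x_2(1-yx_1)$, i.e.\ $x_1 \le x_2$, which is given. If $yx_1 \le \tfrac12 \le yx_2$, then $C(yx_1) = yx_1$ and $C(yx_2) = 1 - yx_2$; dividing the desired inequality by $x_1$ yields $1 - yx_2 \le yx_2$, i.e.\ $yx_2 \ge \tfrac12$, which is exactly the case hypothesis.

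The main obstacle is the clean computation in part 1: verifying that $H(z) - z H'(z) = -\log(1-z)$ is a short but important simplification, and without it the monotonicity of $g$ is not transparent. Part 2, by contrast, is essentially routine once one commits to case analysis on the two thresholds $yx_1, yx_2$ versus $\tfrac12$.
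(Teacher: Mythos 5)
Your proof is correct and follows essentially the same route as the paper's: part 2 is the identical three-way case analysis on the positions of $yx_1, yx_2$ relative to $\tfrac{1}{2}$, and part 1 is the same derivative-based monotonicity argument (the paper fixes $x_1$, sets $f(x)=xH(yx_1)-x_1H(yx)$, and shows $f(x_1)=0$ with $f'>0$, which amounts to the same computation as your identity $H(z)-zH'(z)=-\log(1-z)$). Your packaging via the decreasing function $g(x)=H(yx)/x$ is marginally cleaner but not a different idea.
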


The proof outline of Theorem \ref{thm:disj_pop_main} goes as follows. First, we show that the set of optimal trees in $\mt_t$ intersects with the set $\ma_t$ (Lemma \ref{lem:optimal_tree_conj}). Then we will show that \topdown\,chooses features in $J$ in ascending order of $p_i$ (Lemma \ref{lem:c45_tree}). In Lemma \ref{lem:best_left_skewed_tree} we will prove that the tree found by \topdown\,has the best test error in the set $\ma_t$. By combining all of these facts together we get the theorem.

\begin{lem}
	\label{lem:optimal_tree_conj}
	For any $1 \le t \le k$ there exists a right-skewed tree $R \in \mt_t$ such that $I_{R} \subseteq J$ and $R$ has the lowest test error among all trees in $\mt_t$.
\end{lem}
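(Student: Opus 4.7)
I would start with any optimal tree $T^* \in \mt_t$, i.e.\ one with $E(T^*) = OPT(\gD,t)$, and apply two size-non-increasing surgeries whose composition lands in $\ma_t$: first erase every split on a feature outside $J$, and then erase every subtree that sits beneath a $0$-labeled edge on the root-to-leaf path.

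\textbf{Stage 1 (features outside $J$ are wasteful).} Suppose $v$ is an internal node of a tree $T$ labeled by $x_i$ with $i \notin J$, with child subtrees $T_0, T_1$ contributing $A_0, A_1$ to $E(T)$. Because $x_i$ is independent of $f_J$ and of all other coordinates, the conditional law of $(\rvx, y)$ at each of $v_0, v_1$ agrees with the conditional law at $v$, and in particular the $q$-values inside $T_0, T_1$ are unchanged if we promote either subtree past the $x_i$-split. Consequently, if $T_0$ is promoted to replace the entire subtree at $v$, every leaf probability inside $T_0$ is rescaled by the single factor $1/(1-p_i)$, giving contribution $A_0/(1-p_i)$; similarly promoting $T_1$ gives $A_1/p_i$. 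From the identity
\begin{equation*}
(1-p_i)\cdot\frac{A_0}{1-p_i} + p_i\cdot\frac{A_1}{p_i} = A_0+A_1,
\end{equation*}
at least one of $A_0/(1-p_i), A_1/p_i$ is at most $A_0 + A_1$, so one of the two promotions strictly reduces the number of internal nodes without increasing $E(T)$. Iterating this surgery produces a tree $T'$ with $I_{T'} \subseteq J$, $T' \in \mt_t$, and $E(T') \le E(T^*)$.

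\textbf{Stage 2 (branches off the all-ones path are constant).} If $v$ is an internal node of $T'$ whose root-to-$v$ path contains an edge labeled ``$x_j = 0$'' for some $j \in J$, then every $\rvx$ reaching $v$ has $f_J(\rvx) = 0$, so $q(l) = 0$ at every descendant leaf $l$ of $v$. The subtree rooted at $v$ therefore contributes $0$ to $E(T')$, and replacing it by a single leaf labeled $0$ preserves the error while shrinking the size. Exhausting this step leaves a tree $T''$ every internal node of which lies on the all-ones root-to-leaf path, and each of whose left children is a $0$-labeled leaf; that is, $T'' \in \ma_t$ with $|\mi(T'')| \le t$.

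\textbf{Conclusion and main obstacle.} Chaining the two stages gives $E(T'') \le E(T') \le E(T^*) = OPT(\gD,t)$, and since $T'' \in \mt_t$ the inequalities must be equalities, so $T''$ is a right-skewed tree in $\ma_t$ achieving the minimum test error in $\mt_t$. The delicate step is Stage 1: one has to verify that, after detaching the $x_i$-split, the probability weights inside the promoted subtree really do rescale by the single factor $1/(1-p_i)$ (respectively $1/p_i$) and that the $q$-values inside it are unaffected, which is exactly where the independence of the product distribution and the fact that $f_J$ does not depend on $x_i$ are used. Once this rescaling is pinned down, the convex-combination identity above makes the ``pick the child with the smaller per-mass error'' step automatic, and Stage 2 reduces to the elementary observation that a subtree beneath an $x_j=0$ edge classifies a constant label.
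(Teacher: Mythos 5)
Your proof is correct and takes a genuinely different (and arguably cleaner) route than the paper's, in particular in how it eliminates splits on features outside~$J$. The paper handles this by picking a ``lowest'' node $i$ labeled outside $J$, noting (WLOG) that both of its subtrees are right-skewed paths, and then concatenating the left subtree onto the end of the right subtree, deduplicating features, and bounding the error change via the technical inequality $x_1 C(yx_2) \le x_2 C(yx_1)$ from \lemref{lem:HC_prop} (the $D_1,D_2,D_3$ computation). You instead use a one-step ``promote the better child'' surgery: since $x_i$ with $i\notin J$ is independent of the label and of the remaining coordinates, the conditional distributions at $v_0$ and $v_1$ restricted to $(\rvx_{-i},y)$ coincide with the conditional distribution at $v$, so the leaf probabilities inside a promoted subtree rescale uniformly by $1/(1-p_i)$ (resp.\ $1/p_i$) and the $q$-values and hence leaf labels are unchanged; the identity $(1-p_i)\frac{A_0}{1-p_i}+p_i\frac{A_1}{p_i}=A_0+A_1$ then forces one of the two promotions to be non-increasing. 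This avoids \lemref{lem:HC_prop} entirely for this stage and needs no assumption that the subtrees are already skewed. It is also the same averaging idea the paper itself uses later in \lemref{lem:optimal_tree_basic}, so your version unifies the two arguments. Your Stage 2 (pruning everything beneath an $x_j=0$ edge for $j\in J$, since the DNF is identically $0$ there) matches the paper's handling of the $I_S\subseteq J$ case. The only places to be slightly more careful in a formal write-up: (i) state precisely that ``agrees with the conditional law'' means agreement of the law of $(\rvx_{-i},y)$, not of $x_i$ itself; (ii) note that well-formedness (no repeated features on a root-to-leaf path) is preserved because the features in $T_0$ were already distinct from all features on the root-to-$v_0$ path, of which root-to-$v$ is a prefix; and (iii) note that iteration terminates since each promotion strictly decreases the internal-node count. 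None of these affect correctness.
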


The proof idea is to use Lemma \ref{lem:HC_prop} to show that any tree in $\mt_t$ can be converted to a right-skewed tree $R \in \mt_t$ such that $I_{R} \subseteq J$, without increasing the test error. The full proof appears in the supplementary material.

\ignore{
\begin{proof}
	Let $S \in \mt_t$ be a tree that has the lowest test error among all trees in $\mt_t$. We will construct from $S$ a right-skewed tree $T \in \mt_t$ such that $I_{T} \subseteq J$ while not increasing the test error. If $I_S \subseteq J$ then we are done. This follows since for each node that is in the right-most path from the root to the right-most leaf in $S$ with feature in $J$, its left sub-tree can be replaced with a left leaf with label $0$, without increasing the test error. This results in a right-skewed tree with at most $t$ internal nodes. By adding more nodes with features in $J$ we cannot increase the test error. To see this, let $l \in J\setminus I_T$ and assume we add $l$ to $T$ as a right child of the right leaf in $T$. Denote by $T'$ the resulting tree. Then, $E(T) = \prod_{j \in I_T}{p_j}C\left(\prod_{j \in J \setminus I_T}{p_j}\right)$ because any right-skewed tree $T$ with $I_T \subseteq J$, can only err in the case that $x_j = 1$ for all $j \in I_T$. Similarly, $E(T') = \prod_{j \in I_T \cup \{l\}}{p_j}C\left(\prod_{j \in J \setminus (I_T \cup \{l\})}{p_j}\right)$. 
	Let $z = \prod_{j \in I_T}{p_j}$, $y = \prod_{j \in J \setminus (I_T \cup \{l\})}p_j$, $x_1 = p_l$ and $x_2 = 1$. Then, by Lemma \ref{lem:HC_prop}, we have $zx_1C(yx_2) \le zx_2C(yx_1)$, which is equivalent to $E(T') \le E(T)$. Therefore, we can get the desired tree $T$.
	
	Now assume that $S$ contains a node with a feature in $[n] \setminus J$. Let $i$ be such a node for which the tree rooted at $i$ contains, besides $i$, only nodes with features in $J$. Denote this sub-tree by $S_i$. Then $S_i$ has the following structure. Without loss of generality, the right sub-tree and the left sub-tree of $i$ are both right-skewed (because otherwise we can replace each with a right leaf with label $0$ without increasing the test error). Consider the following modification to $S_i$. Connect the left sub-tree of $i$ to the right-most leaf of $S_i$, remove the node $i$ and replace it with its right child. Let $v$, be the new right leaf in the tree (that was previously the right leaf of the left sub-tree of $i$). Choose the label for $v$ which results in lowest test error. Finally, remove nodes such that for each feature, there is at most one node with that feature in the path from the root to $v$. Let $T'$ be the tree obtained by this modification to $S$. Then $T'$ has one less node with feature in $[n] \setminus J$ compared to $S$. It remains to show that $E(T') \le E(S)$. This will finish the proof, because we can apply this modification multiple times until we have only features with nodes in $J$. Then we can use the previous argument in the case that $I_S \subseteq J$.
	
	We will now show that $E(T') \le E(S)$. Let $V_1$ be the set of nodes in the path from the root to node $i$ in the tree $S$, excluding $i$. Let $V_2$ be the internal nodes in the right sub-tree of $i$ and $V_3$ be the internal nodes in the left sub-tree of $i$. Recall that the left and right sub-tree are right-skewed. For any node with feature $j$ in $V_1$ let $t_j \in \{p_j,q_j\}$ be the corresponding probability according to the label of $j$ in the path. Then we get the following:
	
	\begin{equation}\
	\label{eq:test_error_diff}
	E(S)-E(T') = p_iD_1 + q_iD_2 - D_3
	\end{equation}
	where 
	\begin{align*}
		D_1 &= \prod_{j \in V_1}{t_j}\prod_{j \in V_2}{p_j}C\left(\prod_{j \in J \setminus ((V_1 \cap J) \cup V_2)}{p_j}\right) \\
		D_2 &= \prod_{j \in V_1}{t_j}\prod_{j \in V_3}{p_j}C\left(\prod_{j \in J \setminus ((V_1 \cap J) \cup V_3)}{p_j}\right) \\
		D_3 &= \prod_{j \in V_1}{t_j}\prod_{j \in V_2 \cup V_3}{p_j}C\left(\prod_{j \in J \setminus ((V_1 \cap J) \cup V_2 \cup V_3)}{p_j}\right)
	\end{align*}
	
	This follows since $p_iD_1$ is the error of the path in $S$ from the root to the right most leaf in the right sub-tree of node $i$.  Similarly, $q_iD_2$ is the error of the path from the root to the right most leaf in the left sub-tree of node $i$ and $D_3$ is the error in the path in $T'$ from the root to the new right leaf. 
	
	Let $z = \prod_{j \in V_1}{t_j}\prod_{j \in V_2}{p_j}$, $y = \prod_{j \in J \setminus ((V_1 \cap J) \cup V_2 \cup V_3)}{p_j}$, $x_1 = \prod_{j \in V_3 \setminus V_2}{p_j}$ and $x_2 = 1$. By Lemma \ref{lem:HC_prop}, it holds that $zx_1C(yx_2) \le zx_2C(yx_1)$, or equivalently, $D_3 \le D_1$. Similarly, we have $D_3 \le D_2$. Hence, by Equation \ref{eq:test_error_diff} we conclude that $E(T') \le E(S)$.
\end{proof}
}

The next lemma shows that TopDown chooses features in $J$ in ascending order of $p_i$ using Lemma \ref{lem:HC_prop}. The proof is given in the supplementary material.
\begin{lem}
	\label{lem:c45_tree}
	Assume that TopDown runs for $1 \le t \le k$ iterations. Then $T_t$ is right-skewed, $I_t = J_t \subseteq J$ and $T_t$ has test error $\prod_{i \in J_t}{p_i}C\left(\prod_{i \in J \setminus J_t}{p_i}\right)$. \footnote{In the case that there are features $i_{l_1}, i_{l_2} \in J$ with $p_{i_{l_1}} = p_{i_{l_2}}$ and $l_1 < l_2$, we assume, without loss of generality, that \topdown\,chooses feature $i_{l_1}$ before $i_{l_2}$.}
\end{lem}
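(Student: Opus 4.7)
The plan is to prove the statement by induction on $t$. Both the base case $t=1$ and the inductive step reduce to a single gain-comparison inside $J$, which is handled via part~1 of Lemma~\ref{lem:HC_prop}.

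For the base case, $T_0$ is a single leaf with reach probability $1$ and $q$-value $P := \prod_{j \in J} p_j$. If $i \notin J$, then by independence $q(l_0) = q(l_1) = P$ and the entropy gain is zero. For $i \in J$, splitting on $x_i$ gives $q(l_0) = 0$ (so $H(q(l_0)) = 0$) and $q(l_1) = P/p_i$, with $\tau_i = p_i$, yielding gain $H(P) - p_i\, H(P/p_i)$. Since $H(P)$ does not depend on $i$, the best $i \in J$ is the one minimizing $p_i\, H(P/p_i)$. To compare two choices $i, i' \in J$ with $p_i < p_{i'}$, I set $c = P/(p_i p_{i'})$; then $P/p_i = c\, p_{i'}$ and $P/p_{i'} = c\, p_i$, and applying Lemma~\ref{lem:HC_prop}(1) with $x_1 = p_i$, $x_2 = p_{i'}$, $y = c$ gives $p_i\, H(c\, p_{i'}) < p_{i'}\, H(c\, p_i)$, i.e., $p_i\, H(P/p_i) < p_{i'}\, H(P/p_{i'})$. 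So TopDown picks $i_1$. Applying the same lemma with $x_1 = p_{i_1}$, $x_2 = 1$, $y = P/p_{i_1}$ shows the gain at $i_1$ is strictly positive, ruling out features outside $J$. The resulting $T_1$ is right-skewed (left leaf has $q = 0$, label $0$), $I_1 = J_1 = \{i_1\}$, and the only leaf that can contribute error is the right leaf, yielding $p_{i_1}\, C(P/p_{i_1})$.

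For the inductive step, suppose $T_{t-1}$ is right-skewed with $I_{t-1} = J_{t-1}$. Every left leaf lies on a path that deterministically fixes some $x_{i_j} = 0$, whence $q(\cdot) = 0$ on the whole subtree below it; such a leaf contributes zero entropy and splitting it cannot reduce entropy. Therefore TopDown must split the right-most leaf $l$, which has $p(l) = \prod_{j < t} p_{i_j}$ and $q(l) = P_t := \prod_{j \ge t} p_{i_j}$, choosing some feature in $F_l = [n] \setminus J_{t-1}$. Repeating the base-case argument with $P$ replaced by $P_t$ and $J$ replaced by $J \setminus J_{t-1}$ identifies the best feature as $i_t$. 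This makes $T_t$ right-skewed with $I_t = J_t$, and the only error-contributing leaf is the new right-most leaf, giving the claimed error $\prod_{i \in J_t} p_i \cdot C\bigl(\prod_{i \in J \setminus J_t} p_i\bigr)$.

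The main obstacle is the reparameterization trick used to apply Lemma~\ref{lem:HC_prop}(1) to the gain comparison within $J$: writing both $P/p_i$ and $P/p_{i'}$ in the form $c\, p_{\cdot}$ so that the lemma directly governs the ordering of $p_i\, H(P/p_i)$. Everything else is routine bookkeeping: the fact that features outside $J$ have zero gain follows from independence, and the test-error formula is immediate from the right-skewed structure together with the observation that every left leaf is already perfectly labeled $0$.
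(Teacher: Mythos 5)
Your proof is correct and follows essentially the same route as the paper's: induction on $t$, observing that left leaves of the right-skewed tree carry $q=0$ and hence zero gain, that features outside $J$ have zero gain by independence, and then resolving the within-$J$ gain comparison via Lemma~\ref{lem:HC_prop}(1) by factoring out the common product $y = \prod_{j \in J \setminus (J_{t-1}\cup\{i,i'\})} p_j$ (your $c$) — which is precisely the reparameterization the paper uses. The only cosmetic difference is that the paper establishes strict positivity of the gain via concavity of $H$ rather than a second application of the lemma (which sidesteps the degenerate $y=1$ edge case when only one relevant feature remains); everything else matches.
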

\ignore{
\begin{proof}
	We will first prove by induction that $I_t = J_t$. For the base case $I_0 = J_0 = \emptyset$. Assume that up until iteration $0 \le t < k$, ID3 chose the features $I_t = J_t$. 
	First we note that since feature $i \notin J$ is independent of features in $J$, and $f_J$ depends only on features in $J$, it follows that for any iteration, the gain of feature $i$ is zero. 
	
	Now, for any $l > t$ the gain of feature $i_l \in J$ is 
	\begin{align*}
		&H\left(\prod_{j \in J\setminus J_t}{p_j}\right) - p_{i_l}H\left(\prod_{j \in J \setminus (J_t\cup \{i_l\})}{p_j}\right) + q_{i_l}H(0) \\ &= H\left(\prod_{j \in J\setminus J_t}{p_j}\right) - p_{i_l}H\left(\prod_{j \in J \setminus (J_t\cup \{i_l\})}{p_j}\right) > 0
	\end{align*}
	where the last inequality follows from the concavity of $H$ and the fact that $p_{i_l} > 0$. Therefore, if $t+1 = k$ we are done because $ID3$ will choose feature $i_{k}$ which has the only non-zero gain.
	
	If $t+1 < k$ then let $t+1 < r \le k$. By setting $y = \prod_{j \in J \setminus (J_t\cup \{i_{t+1}, i_r\})}{p_j}$, $x_1 = p_{i_{t+1}}$, $x_2 = p_{i_r}$ and applying Lemma \ref{lem:HC_prop} we have $x_1H(yx_2) \le x_2H(yx_1)$, or equivalently, $p_{i_{t+1}}H\left(\prod_{j \in J \setminus (J_t\cup \{i_{t+1}\})}{p_j}\right) \le p_{i_r}H\left(\prod_{j \in J \setminus (J_t\cup \{i_r\})}{p_j}\right)$ and the inequality is strict if $p_{i_{t+1}} < p_{i_{r}}$. Therefore, $i_{t+1}$ has the largest gain in iteration $t+1$ and ID3 will choose it.
	
	Finally, we note that the latter proof shows that ID3 builds a right-skewed tree. It follows that the test error of $T_t$ is 
	$\prod_{i \in I_t}{p_i}C(\prod_{i \in J \setminus I_t}{p_i})$.
\end{proof}
}

The next lemma shows that the test error of the tree generated by ID3 at iteration $t$, is the lowest among all test errors of trees in $\ma_t$.
\begin{lem}
	\label{lem:best_left_skewed_tree}
	The following equality holds: $$\min_{|I|=t, I \subseteq J}{\prod_{i \in I}{p_i}C\left(\prod_{i \in J \setminus I}{p_i}\right)} = \prod_{i \in J_t}{p_i}C\left(\prod_{i \in J \setminus J_t}{p_i}\right)$$.
\end{lem}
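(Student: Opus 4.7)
The plan is to establish the claim by a standard exchange argument, using the second part of Lemma~\ref{lem:HC_prop} as the key inequality. Let $I\subseteq J$ with $|I|=t$ be arbitrary, and denote the objective by
\[
\Phi(I)\;=\;\prod_{i\in I}p_i\;\cdot\;C\!\left(\prod_{i\in J\setminus I}p_i\right).
\]
I aim to show that whenever $I\neq J_t$, there is an element of $I$ that can be swapped with an element of $J\setminus I$ without increasing $\Phi$.

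Suppose $I\neq J_t$. Since $J_t$ consists of the $t$ indices in $J$ with the smallest $p_i$, there must exist $a\in I$ and $b\in J\setminus I$ with $p_b\le p_a$. Consider the swapped set $I'=(I\setminus\{a\})\cup\{b\}$, which still has size $t$ and is contained in $J$. Define
\[
z=\prod_{i\in I\setminus\{a\}}p_i,\qquad y=\prod_{i\in (J\setminus I)\setminus\{b\}}p_i.
\]
Then $\Phi(I)=z\,p_a\,C(y\,p_b)$ while $\Phi(I')=z\,p_b\,C(y\,p_a)$. Applying part~2 of Lemma~\ref{lem:HC_prop} with $x_1=p_b\le p_a=x_2$ yields
\[
p_b\,C(y\,p_a)\;\le\;p_a\,C(y\,p_b),
\]
which, after multiplying both sides by $z\ge 0$, gives $\Phi(I')\le \Phi(I)$.

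Iterating this swap finitely many times transforms any $I$ into $J_t$ while only weakly decreasing $\Phi$. Since every term is nonnegative and the process terminates (say, in at most $t$ steps, each strictly reducing $|I\setminus J_t|$), we conclude that $\Phi(J_t)\le \Phi(I)$ for every admissible $I$, establishing the stated minimality. The only subtlety is making sure the swap argument halts: choosing $a\in I\setminus J_t$ and $b\in J_t\setminus I$ at each step guarantees $|I\setminus J_t|$ strictly decreases, so after at most $t$ swaps we reach $I=J_t$. I expect no serious obstacle: the only substantive ingredient is Lemma~\ref{lem:HC_prop}(2), which has already been stated and is used here in precisely the form provided.
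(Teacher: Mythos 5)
Your proof is correct and takes essentially the same approach as the paper: a single-swap exchange argument using part~2 of Lemma~\ref{lem:HC_prop}, with $z$ and $y$ factored out exactly as the paper does. The only difference is cosmetic — you spell out the termination of the iterated swap (choosing $a\in I\setminus J_t$, $b\in J_t\setminus I$ so that $|I\setminus J_t|$ strictly decreases), which the paper leaves implicit by directly picking $l\in J_t\setminus K_1$.
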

\begin{proof}
	Define $g(K) = \prod_{i \in K}{p_i}C\left(\prod_{i \in J \setminus K}{p_i}\right)$. Let $K_1 \subseteq J$ such that $K_1 \neq J_t$ and $|K_1|=t$. By definition of $K_1$, there exists $j \in K_1$ and $l \in J_t \setminus K_1$ such that $p_j \ge p_l$.  Define $K_2 = (K_1 \setminus \{j\}) \cup \{l\}$. It suffices to prove that $g(K_1) \ge g(K_2)$.  Denote $z = \prod_{i \in K_1 \setminus \{j\}}{p_i}$ and $y = \prod_{i \in J \setminus (K_1 \cup \{l\})}{p_i}$. It holds that $g(K_1) = zp_jC(yp_l)$ and $g(K_2) = zp_lC(yp_j)$. Since $p_j \ge p_l$, $z > 0$ and $0 < y < 1$, we conclude that $g(K_1) \ge g(K_2)$ by Lemma \ref{lem:HC_prop}.
\end{proof}

We are now ready to prove the theorem.

\begin{proof}[Proof of Theorem \ref{thm:disj_pop_main}]
	By Lemma \ref{lem:optimal_tree_conj}, there exists a tree in $R \in \ma_t$ that has optimal test error among all trees in $\mt_t$.
	By Lemma \ref{lem:c45_tree}, TopDown generates a tree $T_t \in \ma_t$ such that $I_t = J_t \subseteq J$. Finally, Lemma \ref{lem:best_left_skewed_tree} implies that $E(T_t) = E(R)$, which proves the theorem.
\end{proof}
\section{Read-Once DNF with 2 Terms and Uniform Distribution}
\label{sec:read_once_theory}

\begin{figure*}[t]
	\begin{subfigure}{.5\textwidth}
		\centering
		\includegraphics[width=1.0\linewidth]{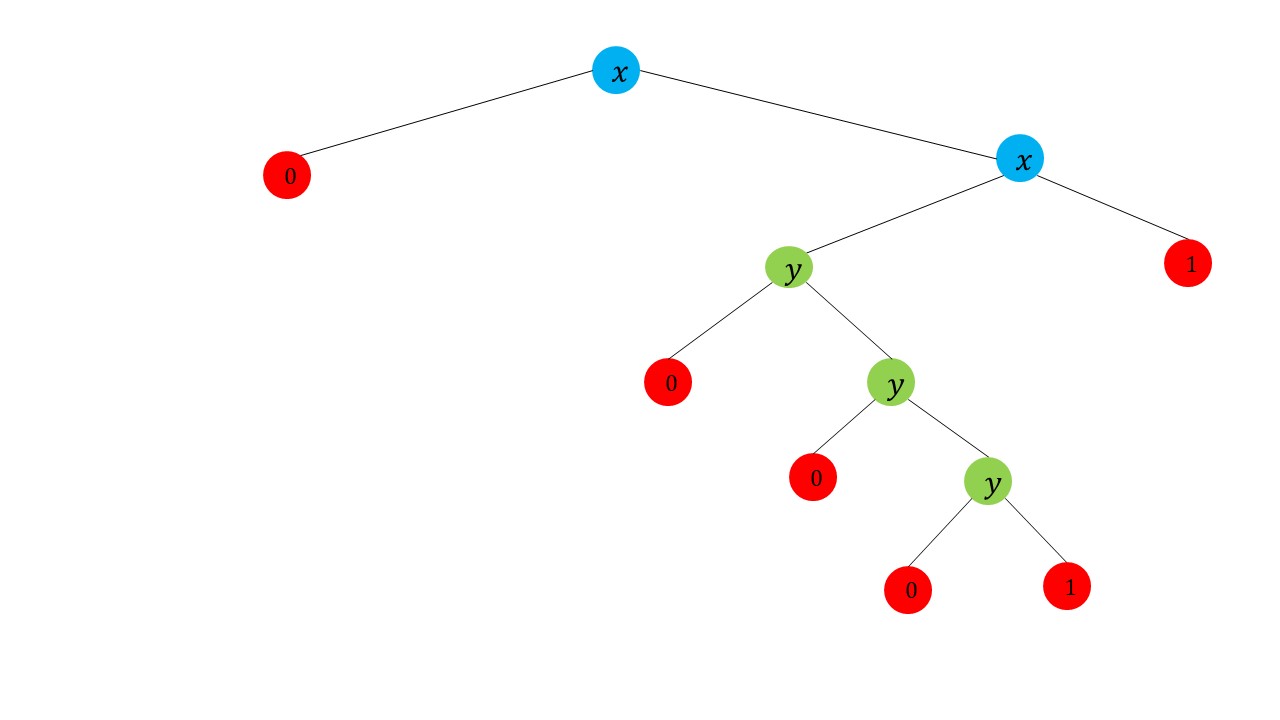}
		\caption{}
		\label{fig:best_first_tree}
	\end{subfigure}%
	\begin{subfigure}{.5\textwidth}
		\centering
		\includegraphics[width=1.0\linewidth]{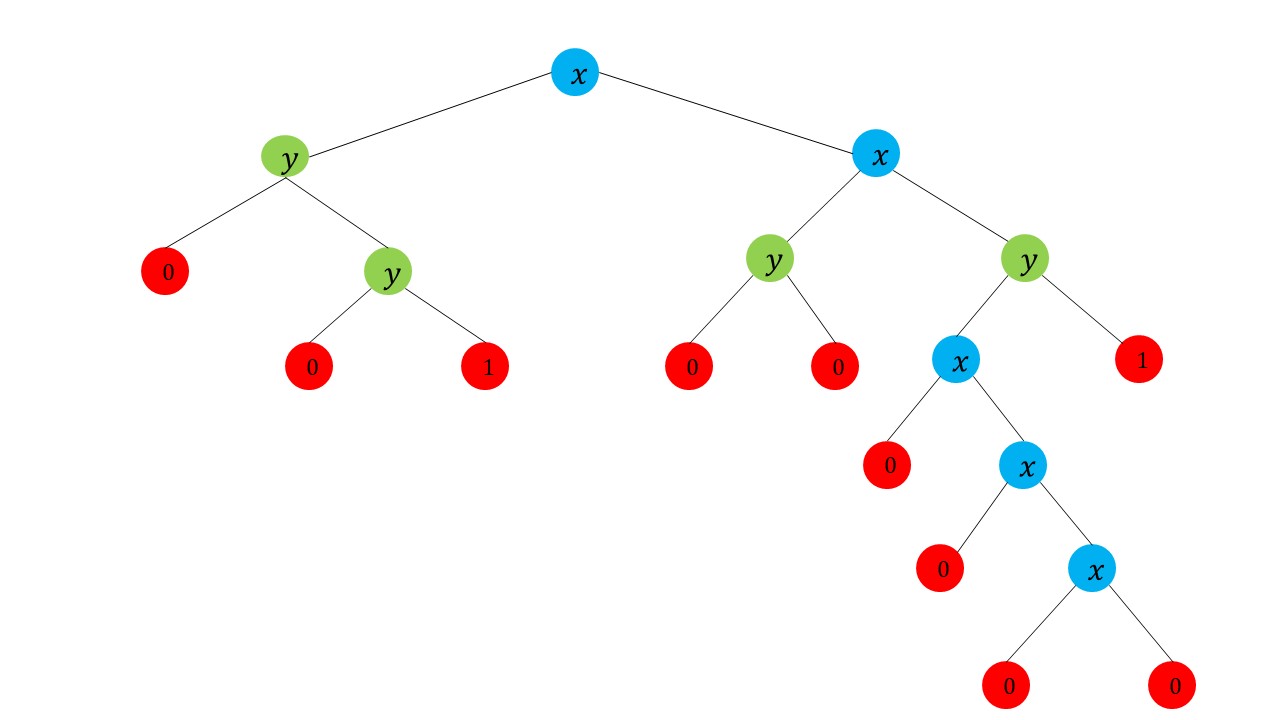}
		\caption{}
		\label{fig:hard_c2}
	\end{subfigure}%
	\caption{\small{Two examples of trees in $\mc_2$. $x$-nodes are in blue, $y$-nodes in green and leaves in red. The left sub-trees of nodes on the right-path are trees in $\mc_1$.}}
	\label{fig:tree_types}
\end{figure*}

In this section, we analyze the \topdown\, algorithm for learning read-once DNFs with 2 terms under the uniform distribution. Similarly to the previous section, we show that for each iteration $t$, TopDown generates a tree with the best test error among all trees with at most $t$ internal nodes. This implies that TopDown attains optimal MIC, i.e., $m_{\gD}^{TopDown}(t^*) = 0$ for all $t^* > 0$. However, as we show in Section \ref{sec:read_once_experiments}, this does not hold for BestFirst \citep{shi2007best} which is a widely used variant of ID3.

\begin{figure*}[th]
	\begin{subfigure}{.5\textwidth}
		\centering
		\includegraphics[width=1.0\linewidth]{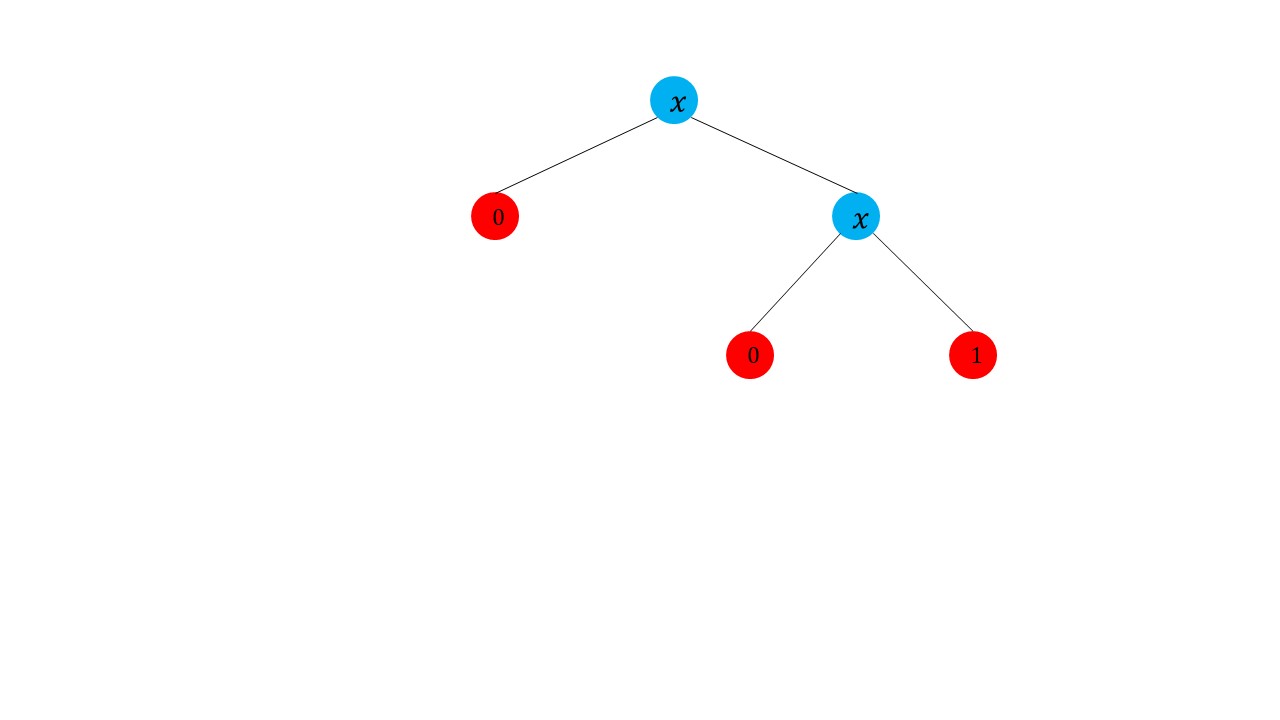}
		\caption{}
		\label{fig:b2}
	\end{subfigure}%
	\begin{subfigure}{.5\textwidth}
		\centering
		\includegraphics[width=1.0\linewidth]{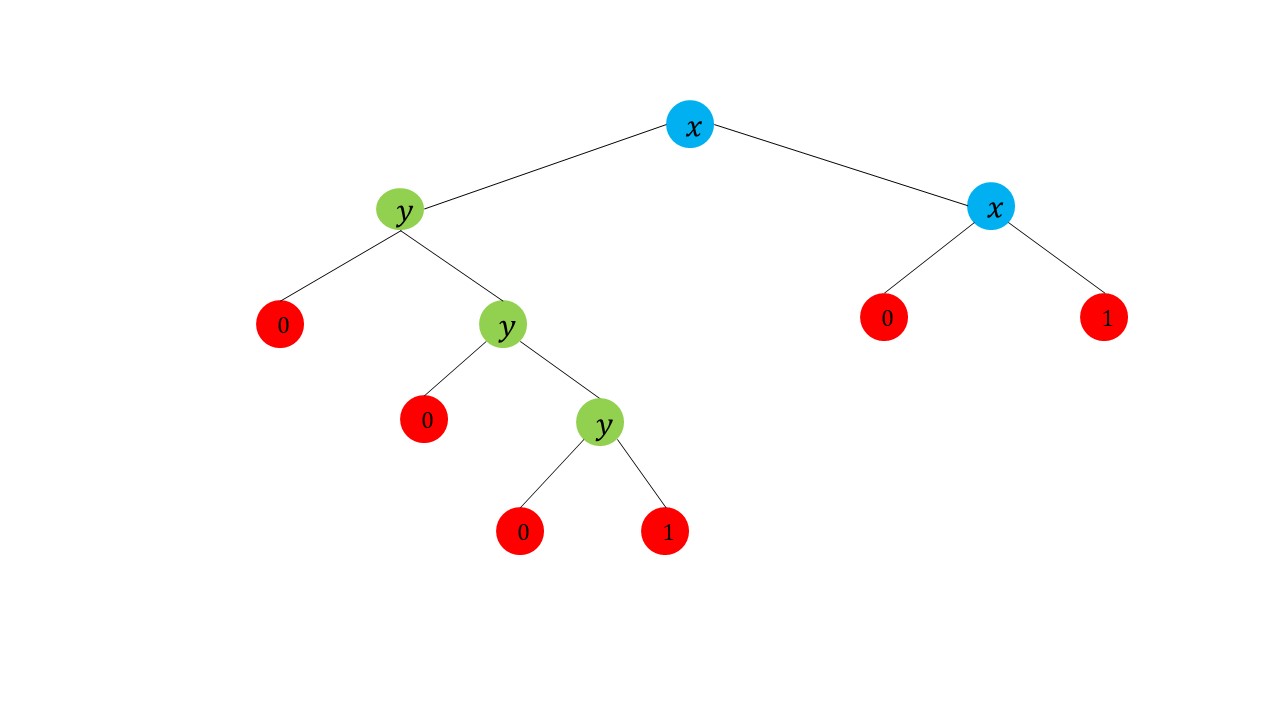}
		\caption{}
		\label{fig:b5}
	\end{subfigure}%
	
	\begin{subfigure}{.5\textwidth}
		\centering
		\includegraphics[width=1.0\linewidth]{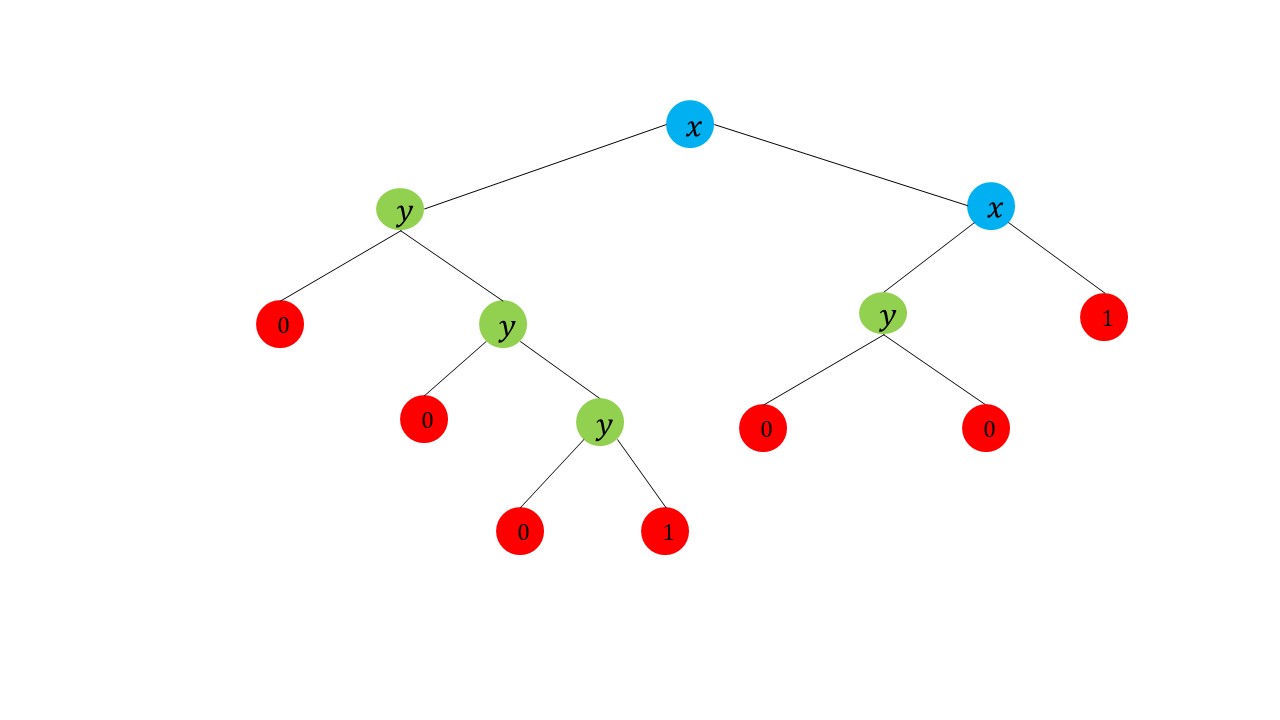}
		\caption{}
		\label{fig:b6}
	\end{subfigure}%
	\begin{subfigure}{.5\textwidth}
		\centering
		\includegraphics[width=1.0\linewidth]{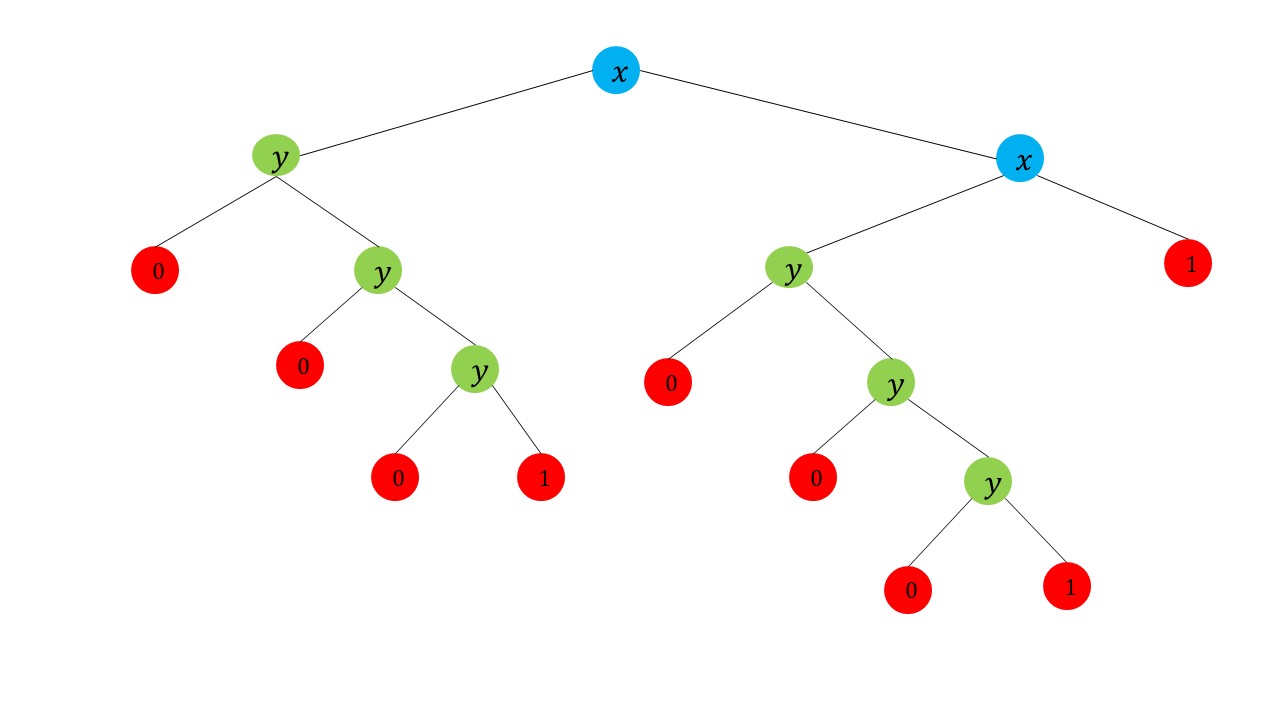}
		\caption{}
		\label{fig:b8}
	\end{subfigure}%
	\caption{\small{$B_t$ examples for DNF $(x_1 \wedge x_2) \vee (y_1 \wedge y_2 \wedge y_3)$. $x$-nodes are in blue, $y$-nodes in green and leaves in red. (a) $t=2$. (b) $t=5$. (c) $t=6$. (d) $t=8$. This tree has 0 test error.}}
	\label{fig:bt}
\end{figure*}

\subsection{Setup and Additional Notations}

\underline{\textbf{Learning Setup:}} We assume that the distribution $\gD$ has a boolean target function $f:\{0,1\}^n \rightarrow \{0,1\}$, where $f(\rvx) = \left(x_{i_1} \wedge \cdots \wedge x_{i_{l}}\right) \vee \left(x_{j_1} \wedge \cdots \wedge x_{j_{m}}\right)$ where $1 \le l \le m$ and all literals are of different variables. WLOG, we only consider literals which are variables and not their negations. By symmetry, our analysis holds for all literal configurations. We assume that the marginal distribution of $\md$ on $\gX$ is the uniform distribiution. For convenience, we will denote $f(\rvx) = (x_1 \wedge x_2 \wedge \cdots \wedge x_l) \vee (y_1 \wedge y_2 \wedge \cdots \wedge y_m)$, where each variable in this formula corresponds to an entry of $\rvx$. We say that the $x_i$ are $x$-variables and similarly define $y$-variables. 

\underline{\textbf{Additional Notations and Definitions}}: Let $T$ be any decision tree. Let $r_T$ be the root of $T$ and $T_L$ and $T_R$ be the left and right sub-trees of $r_T$, respectively. For a node $v$, let $d(v)$ be its depth in the tree, where $d(r_T) = 0$. We let $F(v)$ be the DNF formula corresponding to the node $v$. This is the ground truth DNF $f$ conditioned on all variables and assignments in the path from the root to $v$. If a node $v$ is split with respect to an $x$-variable then we say that $v$ is an $x$-node. Similarly, we define $y$-nodes. We define all nodes which are not $x$-nodes or $y$-nodes as $o$-nodes. From now on, we consider trees whose nodes are one of the latter 3 types.

Let $\mt_t$ be the set of all trees with $t$ nodes. For a node $v$, consider its split with respect to the variable with maximal information gain. Let $v_0$ be its left child and $v_1$ its right child after the split. We define the weighted gain of $v$ as $W(v) = p(v)\left(H(q(v)) -\frac{1}{2}H(q(v_0)) - \frac{1}{2}H(q(v_1))\right)$.
In each iteration, \topdown\,chooses the leaf $l$ for which $W(l)$ is maximal. Equivalently, this is the leaf which maximally decreases $H(T)$. We also define the \textit{error reduction} of the node $v$ to be $C(v) = C(q(v)) -\frac{1}{2}C(q(v_0)) - \frac{1}{2}C(q(v_1))$
and $C(T) = \sum_{v \in \mi(T)}{p(v)C(v)}$ the error reduction of $T$.
In each iteration, in which a node $v$ is split, the test error is decreased by $p(v)C(v)$. Therefore, we get the following identity:
\begin{equation}
\label{eq:et_ct}
E(T) = C(\prob{\md}{f(\rvx)=1}) - C(T)
\end{equation}

\ignore{
\begin{proof}
We will prove the claim by induction on the number of nodes $t$ in the tree. For $t=0$, $\mi(T) = \emptyset$ and the claim holds. Assume it holds for trees with at most $t$ nodes and let $T$ be a tree with $t+1$ nodes. Let $x_j$ be the splitting variable at $r_T$. Then by induction we get
\begin{align*}
E(T) &= \frac{1}{2}E(T_L) + \frac{1}{2}E(T_R) \\ &= \frac{1}{2}C(\prob{\rvx}{f(\rvx)=1\mid x_j=0}) - \frac{1}{2}\sum_{v \in \mi(T_L)}{p_{T_L}(v)C(v)} \\ &+ \frac{1}{2}C(\prob{\rvx}{f(\rvx)=1\mid x_j=1}) - \frac{1}{2}\sum_{v \in \mi(T_R)}{p_{T_R}(v)C(v)} \\ &= C(\prob{\rvx}{f(\rvx)=1}) - C(r_T)  - \sum_{v \in \mi(T) \setminus \{r_T\}}{p(v)C(v)} \\ &= C(\prob{\rvx}{f(\rvx)=1}) - \sum_{v \in \mi(T)}{p(v)C(v)}
\end{align*}
as desired. \textcolor{red}{Change proof.}
\end{proof}
}

By \Eqref{eq:et_ct}, we can reason about $E(T)$ through $C(T)$. For $E(T)$ to be minimal we need $C(T)$ to be maximal.

For a tree $T$ we define its \textit{right-path} to be the nodes in its right-most path. If the right-path consists only of $x$-nodes, we say that it is a right $x$-path. Similarly, we define a right $y$-path. We define $\mc_1$ to be the set of all trees that consist only of a right-path where the nodes are either all $x$ nodes or all $y$-nodes. We also say that these trees are right-paths. We define $\mc_2$ to be the set of all trees such that for each node in the right-path the following holds. If it is an $x$-node, then its left sub-tree is a tree in $\mc_1$ with  $y$-nodes. Similarly, if it is a $y$-node, then its right sub-tree is a tree in $\mc_1$ with $x$-nodes. In Figure \ref{fig:tree_types} we illustrate these sets of trees. For a tree $T\in \mc_1$ such that $F(r_T) = y_1 \wedge y_2 \wedge \cdots \wedge y_m$ we say that $T$ is a full right $y$-path if $T$ has $m$ $y$-nodes.

\subsubsection{Main Result}

We will show that for each number of iterations $t$, \topdown\,builds a tree, which we denote by $B_t$, and this tree has the best test error among all trees with at most $t$ internal nodes. Formally, $B_t$ is a tree of size $t$ in $\mc_2$ whose right-path consists only of $x$-nodes. The left sub-tree of each $x$-node is in $\mc_1$ and consists only of $y$-nodes. Furthermore, for any two $x$-nodes $v_1$ and $v_2$ in the right-path such that $v_2$ is deeper than $v_1$, the following holds. The left sub-tree of $v_2$ is not a leaf only if the left sub-tree of $v_2$ is a full right $y$-path. Figure \ref{fig:bt} shows $B_t$ for the formula $(x_1 \wedge x_2) \vee (y_1 \wedge y_2 \wedge y_3)$. We state our main result in the following theorem:

\begin{thm}
	\label{thm:read_once_main}
	Let $t \ge 1$. Then, $T_t=B_t$ and $T_t$ is a tree with the optimal test error in $\mt_t$.
\end{thm}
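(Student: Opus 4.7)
The plan is to prove both claims of the theorem simultaneously by induction on $t$: (i) TopDown produces $T_t = B_t$, and (ii) $B_t$ minimizes test error in $\mt_t$. The base case $t=1$ reduces to comparing the weighted information gains of $x$- vs.\ $y$-variables at the trivial one-leaf tree. Under the uniform distribution both gains take clean closed forms in $l, m$, and the standing assumption $l \le m$ forces the $x$-variable gain to dominate, so TopDown splits on some $x$-variable; since all single-internal-node trees have the same shape and differ only in the splitting variable, the resulting tree is $B_1$ and the same gain comparison gives optimality.

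For the inductive step establishing $T_t = B_t$, the rigid structure of $B_t$ drastically limits the set of candidate leaves for iteration $t+1$. I would enumerate them into three classes -- (a) the rightmost leaf on the right $x$-path, (b) the rightmost leaf of the unfinished upper $y$-path (if one exists), and (c) left-child leaves labeled $0$ -- and compute the maximum weighted gain $W(\cdot)$ attainable at each. Because $\gD$ is uniform, probabilities along any path are dyadic, the local DNF $F(v)$ at each leaf is either a pure conjunction (once enough $x$'s or $y$'s have been set to $1$) or a two-term read-once DNF on the remaining variables, and the weighted gains simplify to $2^{-d(v)}$ times entropy differences of the form $H(2^{-a}+2^{-b}-2^{-a-b})$. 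Comparing these expressions -- using $l \le m$ together with the depth hierarchy enforced by $B_t$ -- shows that the leaf TopDown splits is exactly the one needed to produce $B_{t+1}$.

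For the optimality claim I would use the identity $E(T) = C(\Pr[f(\rvx)=1]) - C(T)$, so it suffices to maximize $C(T) = \sum_{v \in \mi(T)} p(v)\,C(v)$ over $T \in \mt_t$. The argument is a three-step normalization of an arbitrary optimal $T$. First, any $o$-node splits on a variable independent of $F(v)$ under the uniform distribution, contributing $C(v)=0$; such nodes can be excised without decreasing $C(T)$. Second, on the right-path a $y$-node can be swapped with an unused $x$-variable without decreasing $C(T)$, reducing to the case where the right-path is a right $x$-path whose left subtrees lie in $\mc_1$. Third, among such ``$\mc_2$-shaped'' trees it is always at least as good to extend a shallower upper left $y$-path before starting a deeper one, since a $y$-node at depth $d$ contributes a factor $2^{-d}$ to $C(T)$.

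The main obstacle is this last ``top-heavy'' step: it requires an explicit exchange argument that moves the deepest $y$-node from a lower $y$-path to the end of a shallower upper $y$-path and verifies $\Delta C(T) \ge 0$. This reduces to comparing two telescoping sums of $C$-values of conjunctions of the remaining variables, and must use both $l \le m$ and the concavity of $C$ on $[0,1/2]$; I expect this to be the technical heart of the proof and likely to require an auxiliary lemma analogous to \lemref{lem:HC_prop} but tailored to the dyadic probabilities and mixed $x$/$y$ structure specific to the two-term setting.
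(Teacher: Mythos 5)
Your high-level decomposition---split the theorem into ``$T_t=B_t$'' and ``$B_t$ is optimal in $\mt_t$'', and attack optimality through the identity $E(T)=C(\Pr[f=1])-C(T)$ by normalizing an arbitrary optimal tree toward $B_t$---tracks the paper's Propositions~\ref{prop:topdown_bt} and~\ref{prop:bt_optimal} at a structural level, but both halves have genuine gaps in execution. For $T_t=B_t$: you implicitly assume that once TopDown picks a leaf, it splits on the ``right'' variable there (an $x$-variable in a minimal term on the right-path, a $y$-variable in the corresponding left $y$-path). That is not automatic; the paper imports a result from Fiat and Pechyony (2004) that under the uniform distribution ID3 always selects a variable in a \emph{minimal-size term} of the local DNF $F(v)$. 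Without that, enumerating candidate leaves only constrains \emph{where} the split occurs, not \emph{which} feature is used, so you cannot conclude the next tree is $B_{t+1}$. Also note the paper proves three global monotonicity relations on the weighted gains $W(\cdot)$ (right child of root beats left child; each right-path node beats its parent; each node in a left $y$-path beats its parent), and the entire split order follows; your per-iteration comparison would end up re-deriving the same entropy inequalities (Lemmas~\ref{lem:a_identity}--\ref{lem:numerical}) repeatedly.

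For optimality the proposed ``top-heavy'' exchange step is not just unproved but based on a wrong intuition. You write that ``a $y$-node at depth $d$ contributes a factor $2^{-d}$ to $C(T)$'' and that the comparison should use ``concavity of $C$ on $[0,1/2]$''. In fact on $[0,1/2]$ one has $C(x)=\min(x,1-x)=x$, which is linear, so concavity buys nothing; and, more importantly, most $y$-nodes contribute \emph{exactly zero} to $C(T)$. The paper's key tool here is Lemma~\ref{lem:cv_calc}, an explicit computation of $C(v)$ for every node type, which shows that the only nodes with nonzero error reduction are the last and second-to-last nodes of a full $x$- or $y$-path (and these have closed dyadic forms). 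This collapses the comparison of $C(T)$ across $\mc_2$-shaped trees to bookkeeping over a handful of contributing nodes, which is how Lemma~\ref{lem:right_xnode_tree} and the case analysis in Proposition~\ref{prop:bt_optimal} proceed. Your normalization steps 1 and 2 also compress nontrivial arguments: ``excising an $o$-node'' actually means replacing the tree by the better of its two subtrees and then re-adding nodes (the induction in Lemma~\ref{lem:optimal_tree_basic}), and handling a $y$-rooted optimal tree is not a variable swap but a case split on whether the root's left subtree is a full $x$-path and whether any left $y$-subtree is full, with whole subtrees being moved and the $\Delta C$ tracked via Lemma~\ref{lem:cv_calc}.
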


From this result we immediately get the following:

\begin{cor}
	For any $t^* > 0$, $m_{\gD}^{TopDown}(t^*) = 0$.
\end{cor}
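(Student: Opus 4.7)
The plan is to derive the corollary as an immediate consequence of Theorem \ref{thm:read_once_main}, with no additional technical work required. The theorem asserts two things about every iteration $t \geq 1$: first, that the tree produced by TopDown equals $B_t$; and second, that this tree attains the minimum test error over all trees in $\mt_t$. The second assertion is exactly the statement that $E(T_t) = OPT(\gD, t)$, so by the definition of the MIC constants in Section \ref{sec:msi}, we have $\epsilon_t = E(T_t) - OPT(\gD, t) = 0$ for every $t \geq 1$.

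Having established that every MIC constant vanishes, the plan is then to unfold the definition of $m_{\gD}^{TopDown}$. For any $t^* > 0$, averaging yields
\[
m_{\gD}^{TopDown}(t^*) \;=\; \frac{1}{t^*}\sum_{0 < t \le t^*} \epsilon_t \;=\; \frac{1}{t^*}\sum_{0 < t \le t^*} 0 \;=\; 0,
\]
which is the desired conclusion.

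There is no genuine obstacle here, since all the difficulty lies in Theorem \ref{thm:read_once_main} itself; the corollary is a two-line deduction consisting of (i) reading off $\epsilon_t = 0$ from the optimality clause of the theorem, and (ii) substituting into the definition of MIC. The only thing to double-check is that the theorem's guarantee covers all indices appearing in the sum defining $m_{\gD}^{TopDown}(t^*)$, namely all integers $t$ with $1 \le t \le t^*$, which is precisely the range $t \ge 1$ handled by Theorem \ref{thm:read_once_main}.
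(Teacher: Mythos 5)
Your proposal is correct and matches the paper exactly: the paper derives this corollary immediately from Theorem \ref{thm:read_once_main}, since the optimality clause gives $\epsilon_t = E(T_t) - OPT(\gD,t) = 0$ for every $t \ge 1$, and hence the average defining $m_{\gD}^{TopDown}(t^*)$ vanishes. No gap.
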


\begin{figure*}[t]
	\begin{subfigure}{.33\textwidth}
		\centering
		\includegraphics[width=1.0\linewidth]{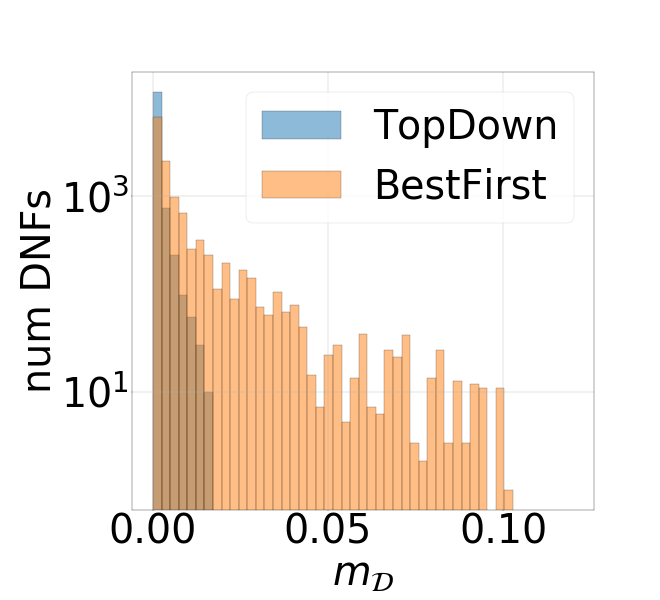}
		\caption{}
		\label{fig:hist_uni}
	\end{subfigure}%
	\begin{subfigure}{.33\textwidth}
		\centering
		\includegraphics[width=1.0\linewidth]{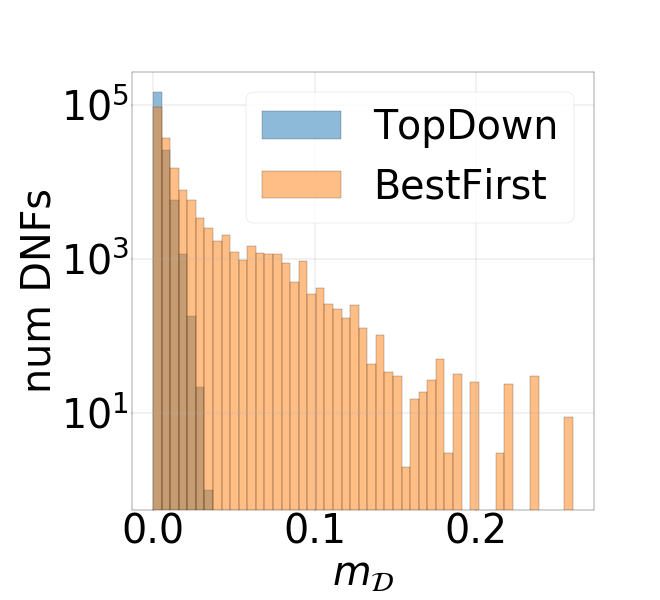}
		\caption{}
		\label{fig:hist_0307}
	\end{subfigure}
	\begin{subfigure}{.33\textwidth}
		\centering
		\includegraphics[width=1.0\linewidth]{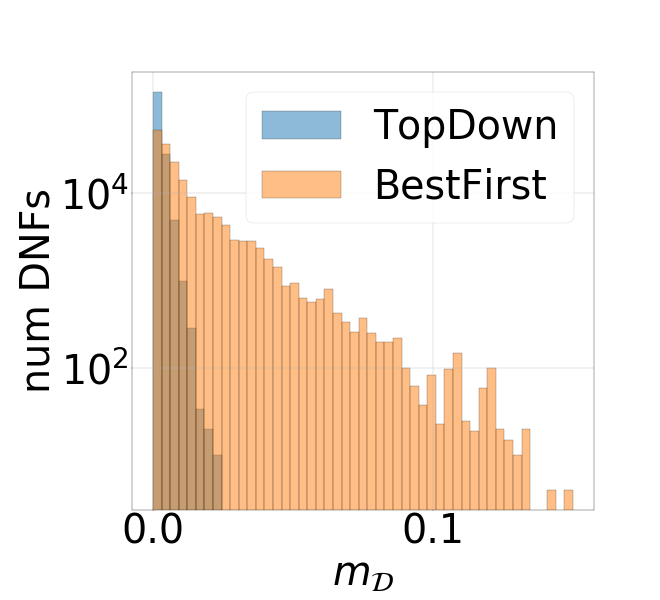}
		\caption{}
		\label{fig:hist_0406}
	\end{subfigure}
	\caption{\small{Empirical results for TopDown and BestFirst. The plots show histogram of $m_{\gD}^{TopDown}$ values for all DNFs in the corresponding setting. The $y$-axis is in log-scale. The plot shows performance of TopDown and BestFirst under the (a) uniform distribution. (b) product distribution with $p_1 = 0.3$, $p_2 = 0.7$. (c) product distribution with $p_1 = 0.4$, $p_2 = 0.6$.}}
	\label{fig:hists}
\end{figure*}

\subsubsection{Proof Sketch of Theorem \ref{thm:read_once_main}}
\label{sec:proof_main_read_once}

The proof proceeds as follows. In the first part we show that for each iteration $t$, \topdown\,outputs the tree $B_t$, i.e., $T_t=B_t$ (Proposition \ref{prop:topdown_bt}). In the second part, we show that for any $t$, $B_t$ has minimum test error among all trees in $\mt_t$ (Proposition \ref{prop:bt_optimal}). These two parts together prove  Theorem \ref{thm:read_once_main}.

We begin with the first part:
\begin{prop}
	\label{prop:topdown_bt}
	Assume \topdown\,runs for $t$ iterations. Then $T_t = B_t$.
\end{prop}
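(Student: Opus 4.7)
The plan is to proceed by induction on $t$, showing that $T_t = B_t$. For the base case $t = 1$, I would directly compute the weighted gain at the root (which coincides with the unweighted information gain since $p(r_T)=1$) for each variable under the uniform distribution. Using concavity of $H$ together with the assumption $l \le m$, the gain of any $x$-variable strictly exceeds that of any $y$-variable, so TopDown splits on an $x$-variable, matching $B_1$ up to the symmetric choice among $x$-variables.

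For the inductive step, assume $T_{t-1}=B_{t-1}$. I would classify the leaves of $B_{t-1}$ into three types: (a) the unique rightmost leaf at the end of the right $x$-path; (b) the deepest leaf of the single partial $y$-subtree hanging off some $x$-node on the right-path, if one exists (by the definition of $B_t$, at most one such partial subtree is present); and (c) all remaining leaves. Every leaf of type (c) either lies in a completed full $y$-path (so $q(l)=1$), in a ``dead'' branch where both DNF terms have been falsified (so $q(l)=0$), or at the end of a right $x$-path that has fully verified $T_1$ (so $q(l)=1$). In every such case $H(q(l))=0$, and the weighted gain of any split at this leaf is zero. Hence TopDown must split a leaf of type (a) or (b).

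It then suffices to compare the maximum weighted gain obtainable at (a) and at (b) and verify that TopDown's pick matches the transition from $B_{t-1}$ to $B_t$. By symmetry of the uniform distribution, all unused $x$-variables at (a) produce the same gain, as do all unused $y$-variables, so the comparison collapses to three candidate moves: extend the right $x$-path at (a), start a new left $y$-subtree at (a), or extend the existing partial $y$-subtree at (b). I would show that whenever a partial $y$-subtree is present, extending it strictly dominates both options at (a); and in its absence, the winner between extending the $x$-path and starting a fresh $y$-subtree agrees with the $B_t$ rule based on the current right-path length versus $l$.

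The main obstacle is the explicit arithmetic in this three-way comparison. Under the uniform distribution, $p(l) = 2^{-d(l)}$, so the weighted gain at a deeper leaf carries an extra factor of $1/2$ per level, yet the entropy terms $H(q(l)), H(q(l_0)), H(q(l_1))$ also shrink as more conjuncts are pinned to $1$. The cleanest route I see is to reduce each pairwise comparison to a monotonicity statement about functions of the form $x H(q\cdot x)$, in the spirit of \lemref{lem:HC_prop}, so that each case becomes a finite inequality in the parameters $l$, $m$, the current $x$-path length, and the current partial-$y$-path length. Carefully tracking which term ($T_1$ or $T_2$) is still ``live'' at each leaf should make this casework tractable and allow the routine algebra to be deferred to the supplementary material.
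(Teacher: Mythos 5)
Your overall strategy---induction on $t$, comparing weighted gains of candidate leaves of $B_{t-1}$---is in the same spirit as the paper's argument, which also reduces to comparing weighted gains via entropy inequalities. However, there is a genuine gap in your leaf classification. You assert that every type (c) leaf has $H(q(l))=0$, but this is false for the left children of $x$-nodes on the right path whose $y$-subtrees have not yet been started: such a leaf has $F(l) = y_1 \wedge \cdots \wedge y_m$, hence $q(l) = 2^{-m} \in (0,1)$ and $H(q(l)) > 0$, so its weighted gain is strictly positive. These leaves are exactly the potential starting points of new $y$-subtrees and are real competitors for the next split, yet they fall through your classification. Your ``start a new left $y$-subtree at (a)'' move appears to be trying to account for them, but (a) is the rightmost leaf, not a left child of an $x$-node, so the labeling does not track the actual candidates. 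To complete the argument you must (i) note that among roots of unstarted $y$-subtrees the topmost dominates because of the weight factor $p(l) = 2^{-(d(l)+1)}$, and (ii) show that the deepest leaf of the current partial $y$-subtree still beats that topmost candidate; the latter requires the monotonicity claim that weighted gain strictly increases going down a right $y$-path (the paper's Part 3), which you invoke only implicitly.

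The paper sidesteps explicit induction on $t$: it proves three standalone monotonicity facts---the right child of the root dominates the left child, weighted gain increases down the right $x$-path, and weighted gain increases down each left $y$-subtree---and combines them with the $p$-ordering of left-subtree roots to pin down the entire fill order in one stroke. Also, your hope to reduce each comparison to a statement ``in the spirit of \lemref{lem:HC_prop}'' (i.e.\ $x_1 H(yx_2) \le x_2 H(yx_1)$) is too optimistic: the proof of this proposition in fact needs a different and more delicate family of four-term entropy inequalities, such as $2H(3a) - 3H(a) + H(2a) - H(5a) > 0$ and $H(a+2b) - H(a+b) - \frac{1}{4}H(a+4b) + \frac{1}{4}H(a) > 0$ (Lemmas \ref{lem:a_identity}--\ref{lem:numerical} in the supplementary); \lemref{lem:HC_prop} alone does not suffice here.
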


The proof uses a result of \citep{fiat2004decision}, which show that in the setting of this section, for each node $v$ that ID3 splits, it chooses a variable in $F(v)$ which is in a minimal size term. For example, if $F(v) = (x_1 \wedge x_2) \vee (x_3 \wedge x_4 \wedge x_5)$, then ID3 chooses either $x_1$ or $x_2$ (they have the same gain due to the uniform distribution assumption). Then, the proof follows by several inequalities involving the entropy function. These inequalities arise from comparing the weighted gain of pairs of nodes and showing that their correctness implies that $T_t = B_t$. We defer the proof to the supplementary material.

Next, we show the following proposition.
	
\begin{prop}
\label{prop:bt_optimal}
	Let $t \ge 1$. Then, $B_t$ is a tree with optimal test error among all trees in $\mt_t$.
\end{prop}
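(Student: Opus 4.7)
The plan is to rewrite the problem using the error-reduction identity $E(T) = C(\prob{\md}{f(\rvx)=1}) - C(T)$ from \Eqref{eq:et_ct}, so that minimizing $E(T)$ is equivalent to maximizing $C(T) = \sum_{v \in \mi(T)} p(v)\,C(v)$, a sum of local contributions whose order of construction is irrelevant. I would then run strong induction on $t$: assuming $B_{t'}$ is optimal in $\mt_{t'}$ for every $t' < t$, I take any optimal $T^* \in \mt_t$ and canonicalize it into $B_t$-form through a sequence of local exchange moves, each preserving or increasing $C(T^*)$.

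The first exchange removes $o$-nodes: under the uniform distribution any $o$-node $v$ has $q(v_0) = q(v_1) = q(v)$, hence $C(v) = 0$, so contracting it to one of its children only frees budget for a potentially relevant split elsewhere. The second exchange forces the root to be an $x$-node. Using $\prob{\md}{f=1 \mid x_1=0} = 2^{-m}$ versus $\prob{\md}{f=1 \mid y_1=0} = 2^{-l}$ together with $l \le m$, a direct calculation of $p(r)\,C(r)$ in the two cases shows the $x_1$-root is locally at least as good as any $y_1$-root; re-optimizing the two subtrees (the left becoming the pure $y$-conjunction, for which \thmref{thm:disj_pop_main} gives a right $y$-path of length $\min(t_L, m)$ as optimum, and the right becoming the reduced DNF $(x_2 \wedge \cdots \wedge x_l) \vee (y_1 \wedge \cdots \wedge y_m)$, to which the inductive hypothesis applies and delivers $B_{t-1-t_L}$) places $T^*$ in $\mc_2$ with an initial $x$-segment along the right-path and right $y$-paths (or leaves) as left subtrees.

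What remains --- and this I expect to be the main obstacle --- is the allocation of the residual $t - l$ nodes across the $l$ left-subtree slots of capacity $m$. A direct computation of the error reduction $\Phi(k)$ contributed by attaching a right $y$-path of length $k$ at a leaf with conditional probability $q = 2^{-m}$ yields $\Phi(k) = 0$ for $k < m$ and $\Phi(m) = 2^{-m}$, so only complete $y$-paths reduce error, and a complete path at depth $d$ contributes exactly $2^{-d} \cdot 2^{-m}$ to $C(T)$. Maximizing $\sum_{i=1}^{l} 2^{-i}\,\Phi(k_i)$ subject to the node budget therefore demands completing $y$-paths shallowest-first, which is exactly the filling rule of $B_t$. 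The delicate points are (i) handling the ``dead weight'' of the currently partial $y$-subtree in $B_t$ --- whose extra nodes contribute zero but cannot hurt either, so $B_t$ still matches the supremum of $C(T)$; and (ii) the subtree rearrangement step of the $x$-root exchange, where one must justify replacing a whole $y$-rooted subtree structure by an $x$-rooted one without losing any $C$, since swapping the root variable changes which DNF each subtree must approximate.
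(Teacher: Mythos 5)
Your high-level strategy --- rewrite via \eqref{eq:et_ct}, induct on $t$, and canonicalize an optimal tree by local exchanges --- is the same skeleton as the paper's, and your $o$-node removal and $\Phi(k)$ analysis of the left $y$-subtrees are both correct and match the paper's Lemmas~\ref{lem:optimal_tree_basic} and \ref{lem:cv_calc}--\ref{lem:right_xnode_tree}. But there is a genuine gap exactly where you flagged your point (ii), and it is not a detail one can defer: the ``second exchange'' cannot be carried out by the local comparison you propose. By Lemma~\ref{lem:cv_calc} (case~4), whenever $l,m\ge 3$ the root contributes $C(r)=0$ \emph{both} when it is an $x$-node and when it is a $y$-node, so ``a direct calculation of $p(r)\,C(r)$ in the two cases'' gives $0=0$ and decides nothing. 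The advantage of an $x$-root over a $y$-root is not visible at the root; it only emerges from comparing the total error reduction of the two rooted structures, because the root choice changes the shape of the DNF each subtree must handle and hence where the non-zero $C(v)$ terms can sit.

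This is precisely what the paper's proof spends most of its effort on. After establishing that some optimal tree lies in $\mc_2$, it splits into cases on the root type. When the root is a $y$-node it does not argue locally; instead it performs an explicit restructuring (Cases~3 and~4): remove the $y$-root and its left subtree, redistribute those nodes into the left $y$-subtrees along the $x$-right-path (using the ``fullness'' dichotomy of Lemma~\ref{lem:cv_calc} to control which redistributions are free and which strictly help), and then verify via the computed values of $C(v)$ that the total reduction does not decrease --- with separate bookkeeping depending on whether the removed left subtree was a full $x$-path or not. Your plan replaces this with ``re-optimizing the two subtrees'' after swapping the root, but that appeal to the inductive hypothesis only gives you the best tree \emph{with the new root fixed}; it does not by itself show that this best-with-$x$-root tree dominates the best-with-$y$-root tree, which is the actual claim. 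So the proposal as written does not close the argument: the step you call delicate is, in the paper, the core of the proof and requires the case analysis and node-moving argument that you have not supplied.
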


The idea of the proof is to first show by induction on $t$ that there exists an optimal tree in $\mc_2$. Then, the proof proceeds by showing that any tree in $\mc_2$ with $t$ internal nodes can be converted to $B_t$ without increasing the test error. To illustrate the latter part with a simple example, consider the case where $f(\rvx) =  (x_1 \wedge x_2) \vee (y_1 \wedge y_2 \wedge y_3)$. In this case the tree in Figure \ref{fig:b5}, which we denote by $T_1$, is equal to $B_5$, whereas the tree in Figure \ref{fig:best_first_tree}, which we denote by $T_2$, is a tree in $\mc_2$ with 5 internal nodes which is not $B_5$. In this example, by direct calculation it can be shown that $E(T_1) < E(T_2)$. However, to illustrate our proof in the general case, let $v_1$ be the left child of the root in $T_1$ and let $v_2$ be the left child of the right child of the root in $T_2$. Then $C(v_1) = C(v_2)$, but $p(v_1) > p(v_2)$. Therefore, $p(v_1)C(v_1) > p(v_2)C(v_2)$. By continuing this way for the rest of the nodes on the left sub-trees, we get $C(T_1) > C(T_2)$. This implies by \eqref{eq:et_ct} that $E(T_1) < E(T_2)$. This technique of comparing error reduction of nodes allows us to handle more complex cases, e.g., to show that the tree in Figure \ref{fig:hard_c2} is not optimal. The full proof is given in the supplementary material.

\section{Efficient Calculation of Optimal Trees}
\label{sec:dp}

In this section, we present dynamic programming algorithms that calculate optimal trees efficiently in a large number of settings. We provide algorithms for uniform distributions and certain product distributions. We use these algorithms in Section \ref{sec:read_once_experiments}, to efficiently calculate MSI.

\subsection{Uniform Distribution}

In this section we assume that $\md$ is the uniform distribution. Let $F$ be a read-once dnf with $k$ terms $c_1,...,c_k$. We refer to $F$ as a set over the terms. Since we are dealing with the uniform distribution, the identity of the variables in the terms do not need to be taken into account for the calculation, only the terms' sizes. For a term $c$, let $c^-$ be a term with $\left|c^-\right| = \left|c\right| - 1$. For any $F$, let $OPT(F,t)$ be the minimal test error of all trees with at most $t$ internal nodes. We use the following relation to compute the test error of optimal trees:
\begin{align*}
OPT(F,t) &= \min_{1\le i \le k,\,0 \le j \le t-1} \Big\{\frac{1}{2}OPT(F\setminus\{c_i\},j) \\ &+\frac{1}{2}OPT(\left(F\setminus\{c_i\}\right)\cup c_i^-,t-1-j)\Big\}
\end{align*}

The correctness of the formula follows since $\md$ is a product distribution and a sub-tree of an optimal tree is optimal. See supplemantary for details. \footnote{Note that we do not need to consider trees with variables that are not in the DNF. Details are provided in the supplementary.} The computational complexity of this procedure is $O(\prod_{i}|c_i|t)$, whereas the total number of decision trees with $t$ inner nodes is at least $t!2^t$.
\subsection{Product Distributions}
\label{sec:product_exps}

In this section we assume that the distribution $\md$ over variables is a product distribution where each variable has distribution $Bernoulli(p_1)$ or $Bernoulli(p_2)$ for $0 < p_1, p_2 < 1$. Let $F$ be a read-once dnf with $k$ terms $c_1,...,c_k$. We refer to $F$ as a set over the terms. For each term $c$, let $n_{c,1}$ be the number of variables with disribution $Bernoulli(p_1)$ and similarly define $n_{c,2}$. Denote by $c(n_1,n_2)$ a term with $n_1$ variables with distribution $Bernoulli(p_1)$ and $n_2$ variables with distribution $Bernoulli(p_2)$. 

Define $OPT(F,t)$ as in the previous section. We use the following relation to compute the test error of optimal trees:

\begin{align*}
\label{eq:product_exp_formula}
& OPT(F,t)  = \\ &\min_{1\le i \le k,\,0 \le j \le t-1}\big\{\min\big\{S_1(F,t,j,c_i),S_2(F,t,j,c_i) \big\}\big\} \numberthis
\end{align*}
where
\begin{align*}
&S_1(F,t,j,c^*) = \left(1-p_1\right)OPT(F\setminus\{c^*\},j) \\& + p_1 OPT(\left(F\setminus\{c^*\}\right)\cup c(n_{c^*,1}-1, n_{c^*,2}),t-1-j)
\end{align*}
and
\begin{align*}
&S_2(F,t,j,c^*) = \left(1-p_2\right)OPT(F\setminus\{c^*\},j) \\ &+ p_2 OPT(\left(F\setminus\{c^*\}\right)\cup c(n_{c^*,1}, n_{c^*,2}-1),t-1-j)
\end{align*}

The correctness of the formula follows similarly as in the uniform case. See supplementary for details.

\section{Empirical Results}
\label{sec:read_once_experiments}

In this section we present our empirical results. We first introduce the BestFirst algorithm and show how it can build a non-optimal tree based on our theoretical results. Then, we apply the algorithms in Section \ref{sec:dp} to compute the MIC in a large number of settings for TopDown and BestFirst. Finally, we show we that our MIC analysis predicts well the test performance gap between TopDown and BestFirst.

\begin{figure*}[t]
	\begin{subfigure}{.33\textwidth}
		\centering
		\includegraphics[width=1.0\linewidth]{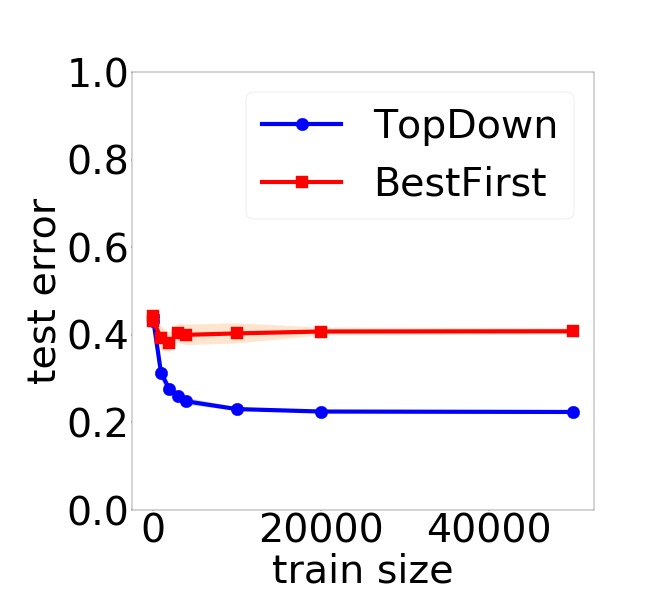}
		\caption{}
		\label{fig:finite_uni}
	\end{subfigure}%
	\begin{subfigure}{.33\textwidth}
		\centering
		\includegraphics[width=1.0\linewidth]{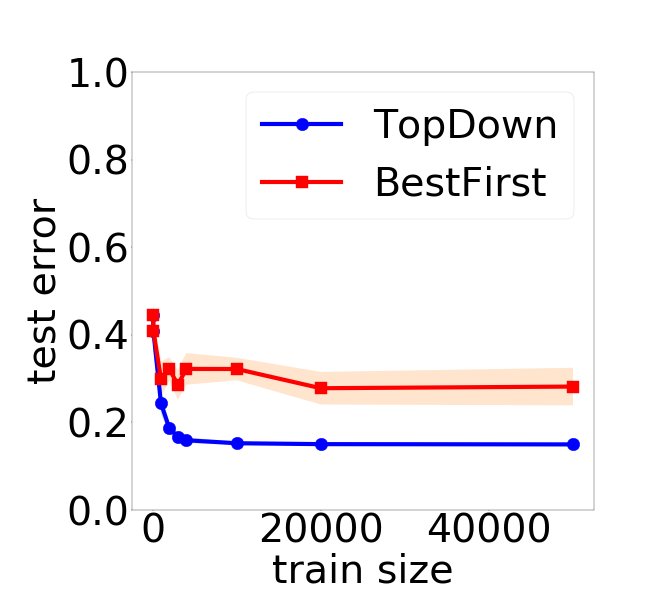}
		\caption{}
		\label{fig:finite_0307}
	\end{subfigure}
	\begin{subfigure}{.33\textwidth}
		\centering
		\includegraphics[width=1.0\linewidth]{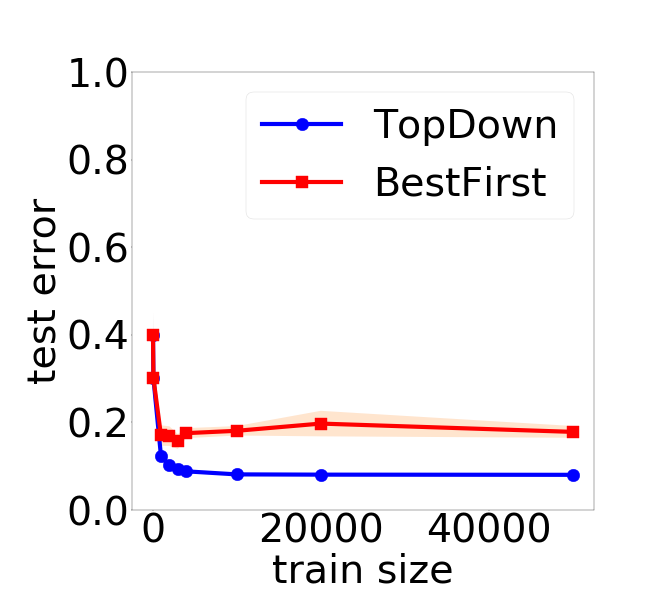}
		\caption{}
		\label{fig:finite_0406}
	\end{subfigure}
	\caption{\small{Empirical results for TopDown and BestFirst on finite samples. The plots show test error as a function of the training set size for marginal distribution: (a) uniform distribution. (b) product distribution with $p_1 = 0.3$, $p_2 = 0.7$. (c) product distribution with $p_1 = 0.4$, $p_2 = 0.6$. }}
	\label{fig:finite}
\end{figure*}

\subsection{BestFirst}
\label{sec:best_first}

The BestFirst algorithm \citep{shi2007best}, is similar to TopDown but with a different policy to choose leaves in each iteration.  A version of BestFirst is used in WEKA \citep{hall2009weka}. Instead of choosing the leaf and feature with maximal weighted gain (Eq. \ref{eq:weighted_gain}), it chooses the leaf $l$ and feature $i$ with maximal gain, i.e, which maximize $H(q(l)) -(1-\tau_i) H(q(l_0)) - \tau_i H(q(l_1))$. This can degrade the performance compared to TopDown. For example, consider learning the formula $(x_1 \wedge x_2) \vee (y_1 \wedge y_2 \wedge y_3)$ under the uniform distribution. As shown in Section \ref{sec:read_once_theory}, after 5 iterations, TopDown will generate the tree in Figure \ref{fig:b5} and this is the optimal tree with 5 internal nodes. However, it can be shown that BestFirst can generate the tree in Figure \ref{fig:best_first_tree} after 5 iterations (see supplementary material for details). As shown in Section \ref{sec:proof_main_read_once}, the latter tree is sub-optimal.

\subsection{MIC Calculation}

We used the algorithms in Section \ref{sec:dp} to compute the MIC for the uniform distribution and product distributions. For the uniform distribution, let $\gF$ be the set of all read-once DNFs over the uniform distribution with at most 8 terms and most 8 literals in each term. We calculated $OPT(F,t)$ for all $1 \le t \le 100$ and for all $F \in \gF$. For each $F \in \gF$ we calculated $m_{\gD}^{TopDown}(100)$. For the product distributions, we experimented with the pairs $(p_1,p_2) \in \{(0.3,0.7),(0.4,0.6)\}$. Given a pair $(p_1,p_2)$, let $\mb$ be the set of all read-once DNFs over $\md$ with at most 5 terms and at most 5 literals, where each literal is a variable (not its negation) which has distribution $Bernoulli(p_1)$ or $Bernoulli(p_2)$. We calculated $OPT(F,t)$ for all $1 \le t \le 100$ and for all $F \in \mb$. For each $F \in \mb$ we calculated $m_{\gD}^{TopDown}(100)$.

The results are given in Figure \ref{fig:hists}. They show that for the vast majority of read-once DNFs, \topdown\ is near-optimal with $m_{\gD}^{TopDown}(100) << 0.01$.  On the other hand, for many DNFs the trees generated by BestFirst can be far from optimal with $m_{\gD}^{TopDown}(100) > 0.05$.



\subsection{Finite Sample Experiments}

Here we compare the performance of TopDown and BestFirst in a typical finite sample setting. We experimented with a DNF with 10 terms and $n=38$. We ran both TopDown and BestFirst for at most 20 iterations and split a leaf only if there were at least 20 points that reach the leaf. Full details of the experiments are provided in the supplementary. Figure \ref{fig:finite} shows the results. It can be seen that TopDown significantly outperforms BestFirst, as predicted by our MIC analysis. 
\section{Conclusion}

In this work, we introduce MIC, which is a  novel metric for analyzing decision tree algorithms. We provide theoretical and empirical evidence that TopDown has near-optimal MIC for a large number settings for learning read-once DNFs under product distributions. MIC presents a novel view of decision tree algorithms and allows to compare their performance in practical settings. 

There are many interesting directions for future work. First, it would be interesting to close the gap between our theory and experiments. We conjecture that in most cases, TopDown generates near-optimal trees for learning read-once DNFs under product distributions. It would be interesting to consider other distributions with dependencies between variables and other DNFs. Furthermore, MIC might be helpful in analyzing other learning algorithms. For example, providing theoretical guarantees for random forests of practical size using the ideas presented here, is an interesting direction for future work. 

\bibliography{topdown}
\bibliographystyle{icml2019}

\clearpage
\appendix
\onecolumn
\icmltitle{Supplementary material}

\section{Proof of Theorem \ref{thm:nph}}

In \citep{hancock1996lower}, it is shown that the problem of finding a consistent decision tree of minimal size is NP-Hard. Denote this problem $P$. We will show a poly-time reduction from $P$ to our problem.  Given an instance of $P$ with training set $S$, let $\gD$ be the uniform distribution over elements in $S$. Then, for  $1 \le t \le n\left|S\right|$ find the tree $T_t$ with test error $OPT(t,\gD)$. Among these trees, return the tree with minimal $t$ such that $OPT(t,\gD)=0$. Since the minimal-size decision tree which is consistent with $S$ has at most $n\left|S\right|$ internal nodes, the claim follows. 

\section{Proofs for Section \ref{sec:conjunction}}
\subsection{Proof of Lemma \ref{lem:HC_prop}}
\begin{enumerate}
	\item Define $f(x) = xH(yx_1) - x_1H(yx)$. Then $f(x_1) = 0$ and by the fact that $H'(x) = -\log\left(\frac{x}{1-x}\right)$ we get for $x_1 \le x \le 1$:
	\begin{align*}
	f'(x) &= -yx_1\log(yx_1) - (1-yx_1)\log(1-yx_1) + yx_1\log\left(\frac{yx}{1-yx}\right) \\ &= -\log(1-yx_1) + yx_1\log\left(\frac{yx(1-yx_1)}{yx_1(1-yx)}\right) > 0
	\end{align*}
	where the last inequality follows since $0 < x_1 \le x$ and $0 < yx_1 < 1$. This completes the proof.
	\item We will consider several cases. If $yx_1 \ge \frac{1}{2}$ then $x_1C(yx_2) \le x_2C(yx_1)$ holds iff $x_1(1-yx_2) \le x_2(1-yx_1)$ which is equivalent to $x_1 \le x_2$. If $yx_2 \le \frac{1}{2}$ then $C(yx_1) = yx_1$ and $C(yx_2) = yx_2$ and the claim holds. Finally, if $yx_1 \le \frac{1}{2} \le yx_2$ then the desired inequality is equivalent to $x_1(1-yx_2) \le x_2yx_1$ which holds since $yx_2 \ge \frac{1}{2}$.
\end{enumerate}

\subsection{Proof of Lemma \ref{lem:optimal_tree_conj}}

Let $S \in \mt_t$ be a tree that has the lowest test error among all trees in $\mt_t$. We will construct from $S$ a right-skewed tree $T \in \mt_t$ such that $I_{T} \subseteq J$ while not increasing the test error. If $I_S \subseteq J$ then we are done. This follows since for each node that is in the right-most path from the root to the right-most leaf in $S$ with feature in $J$, its left sub-tree can be replaced with a left leaf with label $0$, without increasing the test error. This results in a right-skewed tree with at most $t$ internal nodes. By adding more nodes with features in $J$ we cannot increase the test error. To see this, let $l \in J\setminus I_T$ and assume we add $l$ to $T$ as a right child of the right leaf in $T$. Denote by $T'$ the resulting tree. Then, $E(T) = \prod_{j \in I_T}{p_j}C\left(\prod_{j \in J \setminus I_T}{p_j}\right)$ because any right-skewed tree $T$ with $I_T \subseteq J$, can only err in the case that $x_j = 1$ for all $j \in I_T$. Similarly, $E(T') = \prod_{j \in I_T \cup \{l\}}{p_j}C\left(\prod_{j \in J \setminus (I_T \cup \{l\})}{p_j}\right)$. 
Let $z = \prod_{j \in I_T}{p_j}$, $y = \prod_{j \in J \setminus (I_T \cup \{l\})}p_j$, $x_1 = p_l$ and $x_2 = 1$. Then, by Lemma \ref{lem:HC_prop}, we have $zx_1C(yx_2) \le zx_2C(yx_1)$, which is equivalent to $E(T') \le E(T)$. Therefore, we can get the desired tree $T$.

Now assume that $S$ contains a node with a feature in $[n] \setminus J$. Let $i$ be such a node for which the tree rooted at $i$ contains, besides $i$, only nodes with features in $J$. Denote this sub-tree by $S_i$. Then $S_i$ has the following structure. Without loss of generality, the right sub-tree and the left sub-tree of $i$ are both right-skewed (because otherwise we can replace each with a right leaf with label $0$ without increasing the test error). Consider the following modification to $S_i$. Connect the left sub-tree of $i$ to the right-most leaf of $S_i$, remove the node $i$ and replace it with its right child. Let $v$, be the new right leaf in the tree (that was previously the right leaf of the left sub-tree of $i$). Choose the label for $v$ which results in lowest test error. Finally, remove nodes such that for each feature, there is at most one node with that feature in the path from the root to $v$. Let $T'$ be the tree obtained by this modification to $S$. Then $T'$ has one less node with feature in $[n] \setminus J$ compared to $S$. It remains to show that $E(T') \le E(S)$. This will finish the proof, because we can apply this modification multiple times until we have only features with nodes in $J$. Then we can use the previous argument in the case that $I_S \subseteq J$.

We will now show that $E(T') \le E(S)$. Let $V_1$ be the set of nodes in the path from the root to node $i$ in the tree $S$, excluding $i$. Let $V_2$ be the internal nodes in the right sub-tree of $i$ and $V_3$ be the internal nodes in the left sub-tree of $i$. Recall that the left and right sub-tree are right-skewed. For any node with feature $j$ in $V_1$ let $t_j \in \{p_j,q_j\}$ be the corresponding probability according to the label of $j$ in the path. Then we get the following:

\begin{equation}\
\label{eq:test_error_diff}
E(S)-E(T') = p_iD_1 + q_iD_2 - D_3
\end{equation}
where 
\begin{align*}
D_1 &= \prod_{j \in V_1}{t_j}\prod_{j \in V_2}{p_j}C\left(\prod_{j \in J \setminus ((V_1 \cap J) \cup V_2)}{p_j}\right) \\
D_2 &= \prod_{j \in V_1}{t_j}\prod_{j \in V_3}{p_j}C\left(\prod_{j \in J \setminus ((V_1 \cap J) \cup V_3)}{p_j}\right) \\
D_3 &= \prod_{j \in V_1}{t_j}\prod_{j \in V_2 \cup V_3}{p_j}C\left(\prod_{j \in J \setminus ((V_1 \cap J) \cup V_2 \cup V_3)}{p_j}\right)
\end{align*}

This follows since $p_iD_1$ is the error of the path in $S$ from the root to the right most leaf in the right sub-tree of node $i$.  Similarly, $q_iD_2$ is the error of the path from the root to the right most leaf in the left sub-tree of node $i$ and $D_3$ is the error in the path in $T'$ from the root to the new right leaf. 

Let $z = \prod_{j \in V_1}{t_j}\prod_{j \in V_2}{p_j}$, $y = \prod_{j \in J \setminus ((V_1 \cap J) \cup V_2 \cup V_3)}{p_j}$, $x_1 = \prod_{j \in V_3 \setminus V_2}{p_j}$ and $x_2 = 1$. By Lemma \ref{lem:HC_prop}, it holds that $zx_1C(yx_2) \le zx_2C(yx_1)$, or equivalently, $D_3 \le D_1$. Similarly, we have $D_3 \le D_2$. Hence, by Equation \ref{eq:test_error_diff} we conclude that $E(T') \le E(S)$.

\subsection{Proof of Lemma \ref{lem:c45_tree}}
We will first prove by induction that $I_t = J_t$. For the base case $I_0 = J_0 = \emptyset$. Assume that up until iteration $0 \le t < k$, ID3 chose the features $I_t = J_t$. 
First we note that since feature $i \notin J$ is independent of features in $J$, and $f_J$ depends only on features in $J$, it follows that for any iteration, the gain of feature $i$ is zero. 

Now, for any $l > t$ the gain of feature $i_l \in J$ is 
\begin{align*}
&H\left(\prod_{j \in J\setminus J_t}{p_j}\right) - p_{i_l}H\left(\prod_{j \in J \setminus (J_t\cup \{i_l\})}{p_j}\right) + q_{i_l}H(0) \\ &= H\left(\prod_{j \in J\setminus J_t}{p_j}\right) - p_{i_l}H\left(\prod_{j \in J \setminus (J_t\cup \{i_l\})}{p_j}\right) > 0
\end{align*}
where the last inequality follows from the concavity of $H$ and the fact that $p_{i_l} > 0$. Therefore, if $t+1 = k$ we are done because TopDown will choose feature $i_{k}$ which has the only non-zero gain.

If $t+1 < k$ then let $t+1 < r \le k$. By setting $y = \prod_{j \in J \setminus (J_t\cup \{i_{t+1}, i_r\})}{p_j}$, $x_1 = p_{i_{t+1}}$, $x_2 = p_{i_r}$ and applying Lemma \ref{lem:HC_prop} we have $x_1H(yx_2) \le x_2H(yx_1)$, or equivalently, $p_{i_{t+1}}H\left(\prod_{j \in J \setminus (J_t\cup \{i_{t+1}\})}{p_j}\right) \le p_{i_r}H\left(\prod_{j \in J \setminus (J_t\cup \{i_r\})}{p_j}\right)$ and the inequality is strict if $p_{i_{t+1}} < p_{i_{r}}$. Therefore, $i_{t+1}$ has the largest gain in iteration $t+1$ and TopDown will choose it.

Finally, we note that the latter proof shows that TopDown builds a right-skewed tree. It follows that the test error of $T_t$ is 
$\prod_{i \in I_t}{p_i}C(\prod_{i \in J \setminus I_t}{p_i})$.

\section{Proofs for Section \ref{sec:read_once_theory}}
\subsection{Proof of Proposition \ref{prop:topdown_bt}}
We first prove several inequalities which involve the entropy function.

\begin{lem}
	\label{lem:a_identity}
	Let $0 < a < \frac{1}{5}$, then
	$$2H(3a) - 3H(a) + H(2a) - H(5a) > 0$$
\end{lem}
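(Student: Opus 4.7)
The plan is to set $\phi(a) = 2H(3a) - 3H(a) + H(2a) - H(5a)$, observe that $\phi(0)=0$, and reduce the claim to proving $\phi'(a) > 0$ on $(0,1/5)$. Using the fact that $H'(x) = \log((1-x)/x)$, I would write
\[
\phi'(a) = 6H'(3a) - 3H'(a) + 2H'(2a) - 5H'(5a),
\]
and split each $H'(ka)$ into a $\log(ka)$ contribution and a $\log(1-ka)$ contribution. The key computation is that the coefficients of $\log a$ cancel, since $-6 - 2 + 3 + 5 = 0$, leaving from the $\log(ka)$ side only the constant $-6\log 3 - 2\log 2 + 5\log 5 = \log\bigl(5^5/(3^6\cdot 4)\bigr) = \log(3125/2916) > 0$. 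Collecting the $\log(1-ka)$ contributions into
\[
G(a) := 6\log(1-3a) + 2\log(1-2a) - 3\log(1-a) - 5\log(1-5a),
\]
I then have the clean identity $\phi'(a) = \log(3125/2916) + G(a)$.

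Next I would show $G(a) > 0$ for $a \in (0,1/5)$. Since $G(0) = 0$, it suffices to show $G'(a) > 0$. Differentiating,
\[
(\ln 2)\, G'(a) = -\frac{18}{1-3a} - \frac{4}{1-2a} + \frac{3}{1-a} + \frac{25}{1-5a},
\]
and I would clear the positive denominator $(1-a)(1-2a)(1-3a)(1-5a)$ to obtain a polynomial inequality in $a$. Expanding the four triple products $(1-5a)(1-3a)(1-2a)$, $(1-a)(1-3a)(1-2a)$, $(1-a)(1-5a)(1-2a)$, $(1-a)(1-5a)(1-3a)$ and combining, both the linear and cubic terms cancel, and the inequality reduces to the very clean $6 - 30 a^2 > 0$, which holds whenever $|a| < 1/\sqrt 5$, and in particular on $(0,1/5)$.

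Putting the pieces together: $G$ is strictly increasing on $[0,1/5)$ with $G(0)=0$, so $G(a) > 0$ there, whence $\phi'(a) \geq \log(3125/2916) + G(a) > 0$ on $(0,1/5)$. Combined with $\phi(0)=0$ this yields $\phi(a) > 0$ on $(0,1/5)$, which is the desired inequality.

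The main obstacle is the miraculous-looking polynomial cancellation in the last step. It is routine but requires careful bookkeeping of four cubic expansions, and the argument really depends on the exact coefficients $2,-3,1,-1$ and arguments $3a,a,2a,5a$ in the lemma; I would just expand each triple product explicitly, line up coefficients of $1,a,a^2,a^3$, and verify that only the constant and $a^2$ terms survive. Once that algebra is done, the rest of the proof is completely mechanical.
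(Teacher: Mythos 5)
Your proof is correct, and it diverges from the paper's argument at exactly the place you flag as the ``obstacle.'' Both proofs reduce to showing $\phi'(a)>0$ via $\phi(0)=0$, and both implicitly split $\phi'(a)$ into the constant piece $-6\log 3 - 2\log 2 + 5\log 5 = \log(3125/2916)>0$ coming from the $\log(ka)$ terms (the paper records this as $a^3(5a)^5 > (3a)^6(2a)^2$) plus a remainder involving the $\log(1-ka)$ terms. From there the routes split. The paper handles the remainder by a direct factorization trick: it shows $(1-3a)^6(1-2a)^2 > (1-a)^3(1-5a)^5$ by chaining the two elementary inequalities $(1-3a)(1-2a)>(1-a)(1-5a)$ and $(1-3a)^2>(1-a)(1-5a)$, which after pairing factors leaves only $(1-3a)^2/(1-5a)^2>1$. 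You instead differentiate the remainder $G$ once more, clear the common denominator $(1-a)(1-2a)(1-3a)(1-5a)$, and observe that the resulting cubic collapses to $6-30a^2$ because the coefficients of $a$ and $a^3$ both vanish; I checked the four expansions and the cancellation is exactly as you claim, giving $G'(a)>0$ on $(0,1/\sqrt5)\supset(0,1/5)$. Your approach is more mechanical and self-certifying -- it requires no inspiration, only bookkeeping, and the final polynomial $6-30a^2$ is a pleasant surprise -- while the paper's factor-pairing is shorter on the page but demands spotting the right two quadratic inequalities. Either route is rigorous; yours has the added bonus of making the optimal range of validity ($a<1/\sqrt5$) visible, strictly larger than the $a<1/5$ the lemma requires.
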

\begin{proof}
	Define $g(a) = 2H(3a) - 3H(a) + H(2a) - H(5a)$. Since $g(0) = 0$, it suffices to prove that $g'(a) > 0$ for $0 < a < \frac{1}{5}$. We have, 
	\begin{align*}
	g'(a) &= -6\log\left(\frac{3a}{1-3a}\right) + 3\log\left(\frac{a}{1-a}\right) - 2\log\left(\frac{2a}{1-2a}\right) +5\log\left(\frac{5a}{1-5a}\right) \\ &= \log\left(\frac{(1-3a)^6 a^3 (1-2a)^2 (5a)^5}{(3a)^6(1-a)^3 (2a)^2 (1-5a)^5}\right)
	\end{align*}
	
	First, we notice that $a^3(5a)^5 > (3a)^6(2a)^2$. Therefore, we are left to show that
	\begin{equation}
	\label{eq:a_identity}
	\frac{(1-3a)^6(1-2a)^2}{(1-a)^3(1-5a)^5} > 1
	\end{equation}
	By the following 2 inequalities:
	\begin{enumerate}
		\item $(1-3a)(1-2a) > (1-a)(1-5a)$.
		\item $(1-3a)^2 > (1-a)(1-5a)$.
	\end{enumerate}
	proving \Eqref{eq:a_identity} reduces to showing that $\frac{(1-3a)^2}{(1-5a)^2} > 1$, which is true, as desired.
	
\end{proof}

\begin{lem}
	\label{lem:a_identity2}
	Let $0 < a = \frac{1}{2^k}$, $k \ge 2$ and $H(x) = -x\log(x) - (1-x)\log(1-x)$. Then,
	$$2H(3a-2a^2) - 3H(a) + H(2a) - H(5a-4a^2) > 0$$
\end{lem}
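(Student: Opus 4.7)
The plan is to mirror the proof of \lemref{lem:a_identity}. Define $g(a) = 2H(3a-2a^2) - 3H(a) + H(2a) - H(5a-4a^2)$. Since $g(0) = 0$, it suffices to show $g'(a) > 0$ on the interval $(0, 1/4]$, which contains every dyadic $a = 1/2^k$ with $k \ge 2$. Note that $a \le 1/4$ is precisely the range in which $5a - 4a^2 \le 1$, so $H(5a-4a^2)$ is well defined.

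The key factorizations parallel those used in the previous lemma:
\begin{align*}
3a-2a^2 &= a(3-2a), & 1-(3a-2a^2) &= (1-a)(1-2a),\\
5a-4a^2 &= a(5-4a), & 1-(5a-4a^2) &= (1-a)(1-4a).
\end{align*}
Using $H'(x) = -\log(x/(1-x))$ and the chain rule with $(3a-2a^2)' = 3-4a$ and $(5a-4a^2)' = 5-8a$, I would expand $g'(a)$ as a linear combination of $\log a,\log(1-a),\log(1-2a),\log(1-4a),\log(3-2a),\log(5-4a)$, plus a constant contributed by $\log 2$. A direct coefficient count shows that the coefficient of $\log a$ vanishes (in exact analogy with the cancellation of $a$-powers in \lemref{lem:a_identity}), leaving an expression with coefficients that are linear in $a$.

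The main obstacle is that the non-constant coefficients arising from the quadratic arguments $3a-2a^2$ and $5a-4a^2$ prevent consolidation into a single logarithm ratio, which is the clean trick that finished \lemref{lem:a_identity}. To overcome this I would separate the terms of $g'(a)$ whose sign is easy to read off on $(0,1/4)$—the positive contributions $-2\log(1-a)$, $(5-8a)\log(5-4a)$, $(8a-5)\log(1-4a)$ versus the negative contributions $-2(3-4a)\log(3-2a)$, $8(1-a)\log(1-2a)$, and the constant $-2$—and then bound the negatives using elementary inequalities such as $\log(1-4a) \le 2\log(1-2a)$ and $\log(3-2a) \le \log 3$, so that the positive group dominates uniformly on $(0, 1/4]$.

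As a safety net, since only $a \in \{1/2^k : k \ge 2\}$ is required, I can fall back on a Taylor expansion of $g$ around $a = 0$. A short computation (the coefficients of $a\log a$ and of $a/\ln 2$ both cancel) yields $g(a) = (5\log 5 - 6\log 3 - 2)a + O(a^2)$, whose leading coefficient is strictly positive. This asymptotic handles all sufficiently small dyadic $a$, after which only the finite set $\{1/4, 1/8, \ldots, 1/2^{k_0}\}$ above the Taylor threshold remains, and each of these can be verified by direct substitution.
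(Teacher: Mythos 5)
Your fallback is the paper's own strategy: prove $g'(a)>0$ for all sufficiently small $a$ and then check the finitely many remaining dyadic values directly. The paper makes this rigorous by bounding $(3-2a)^{6-8a}$ above by the linear function $-5120a+3^6$ (via convexity of $h_1(x)=(3-2x)^{6-8x}$ and a comparison at the endpoints of $[0,1/512]$) and then invoking Bernoulli's inequality, which handles all $0<a\le 1/512$ and leaves $k\in\{2,\dots,8\}$ to direct numerical verification. Your Taylor version leaves the threshold undetermined: the leading coefficient $5\log 5 - 6\log 3 - 2 \approx 0.1$ is indeed positive, but the remainder is $O(a^2\log a)$ rather than $O(a^2)$, and without an explicit bound on it you cannot say which finitely many $k$ are left for hand-verification, so the argument is not yet complete.

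Your primary plan has a concrete gap. The expansion
\[
g'(a) = -2 - 2\log(1-a) + 8(1-a)\log(1-2a) + (8a-5)\log(1-4a) - 2(3-4a)\log(3-2a) + (5-8a)\log(5-4a)
\]
and your sign decomposition of the six terms are both correct. But the two inequalities you propose, $\log(1-4a)\le 2\log(1-2a)$ and $\log(3-2a)\le\log 3$, yield the lower bound
\[
-2 - 2\log(1-a) + (8a-2)\log(1-2a) - 2(3-4a)\log 3 + (5-8a)\log(5-4a),
\]
which at $a=0.1$ evaluates to about $-0.3$ even though $g'(0.1)\approx 0.6$; the loss from freezing $\log(3-2a)$ at $\log 3$ is already on the order of $0.5$ there. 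So these bounds are too coarse to close the argument uniformly on $(0,1/4]$, and any working set of elementary estimates would need to retain the $a$-dependence of $\log(3-2a)$ and $\log(1-4a)$ far more tightly — which is precisely why the paper retreats to the small-$a$ regime plus a finite check rather than attempting a uniform bound.
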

\begin{proof}
	Define $g(a) = 2H(3a-2a^2) - 3H(a) + H(2a) - H(5a-4a^2)$. We will prove that the inequality holds for all $0 < a \le \frac{1}{512}$. The inequality can be proved to hold for the cases $2 \le k \le 8$ by calcaluting $g(a)$ with sufficiently high precision.
	
	Since $g(0) = 0$. It suffices to show that $g'(a) > 0$ for $0 < a \le \frac{1}{512}$. We have, 
	\begin{align*}
	g'(a) &= -2(3-4a)\log\left(\frac{3a-2a^2}{1-(3a-2 a^2)}\right) +3\log\left(\frac{a}{1-a}\right) - \\ &- 2\log\left(\frac{2a}{1-2a}\right) + (5-8a)\log\left(\frac{5a-4a^2}{1-(5a-4a^2)}\right) \\ &= \log\left(\frac{\left((1-a)(1-2a)\right)^{6-8a}a^3(1-2a)^2\left(5a-4a^2\right)^{5-8a}}{\left(3a-2a^2\right)^{6-8a}(1-a)^3 (2a)^2\left((1-a)(1-4a)\right)^{5-8a}}\right) \\ &= \log\left(\frac{(1-2a)^{8-8a}\left(5-4a\right)^{5-8a}}{4\left(3-2a\right)^{6-8a}(1-a)^2 \left(1-4a\right)^{5-8a}}\right)
	\end{align*}
	Define $$\Delta = (1-2a)^{8-8a}\left(5-4a\right)^{5-8a} - 4\left(3-2a\right)^{6-8a}(1-a)^2 \left(1-4a\right)^{5-8a}$$
	It suffices to show that $\Delta > 0$. Define $h_1(x) = \left(3-2x\right)^{6-8x}$ and $h_2(x) = -5120x + 3^6$. It holds that $h_1(0) = h_2(0) = 3^6$ and $h_2\left(\frac{1}{512}\right) = 719 > 3^{6-\frac{1}{64}} > h_1\left(\frac{1}{512}\right)$. Furthermore, $h_1(x)$ is convex for $0 \le x \le \frac{1}{512}$, because in this case:
	\begin{align*}
	\frac{\partial^2 h_1}{\partial x^2}\left(x\right) = \left(3-2x\right)^{6-8x}\left(\frac{72-32x}{(3-2x)^2} + \left(-8\ln(3-2x) - \frac{2(6-8x)}{3-2x}\right)^2\right) > 0
	\end{align*}
	It follows that $h_2(x) > h_1(x)$ for all $0 \le x \le \frac{1}{512}$.
	Therefore, we get:
	\begin{align*}
	\Delta &> (1-2a)^{7.5}\left(5-\frac{1}{128}\right)^{5-\frac{1}{64}} - 4(-5120a + 3^6) \\ &> (1-15a)(4\cdot 3^6 + 100) - 4(-5120a + 3^6) \\ &> 0
	\end{align*}
	where the second inequality follows by Bernoulli's inequality and the last follows by the assumption $a \le \frac{1}{512}$.	
\end{proof}

\begin{lem}
	\label{lem:ab_identity1}
	Let $0 < a,b < 1$ such that $b \ge 2a$ and $a+2b < 1$. Then $$2H(a+b) - 3H(a) + H(2a) - H(a+2b) > 0$$
\end{lem}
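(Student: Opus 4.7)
The plan is to reduce this inequality to the previously established Lemma \ref{lem:a_identity} by a monotonicity argument in $b$. Fix $a$ and define
\[
g(b) = 2H(a+b) - 3H(a) + H(2a) - H(a+2b),
\]
viewed as a function of $b$ on the interval $[2a, (1-a)/2)$. The endpoint $b = 2a$ gives $g(2a) = 2H(3a) - 3H(a) + H(2a) - H(5a)$, which is exactly the quantity bounded below in Lemma \ref{lem:a_identity}. The hypotheses $b \ge 2a$ and $a + 2b < 1$ force $5a \le a+2b < 1$, so $0 < a < 1/5$, which is the range in which Lemma \ref{lem:a_identity} applies. Hence $g(2a) > 0$.

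It therefore suffices to show $g'(b) > 0$ on the entire interval, so that $g(b) \ge g(2a) > 0$. Using $H'(x) = -\log\!\bigl(x/(1-x)\bigr)$ and the chain rule gives
\[
g'(b) = 2\log\frac{(a+2b)(1-a-b)}{(a+b)(1-a-2b)},
\]
so positivity of $g'$ is equivalent to $(a+2b)(1-a-b) > (a+b)(1-a-2b)$. Expanding both sides and cancelling the common terms $a - a^2 - 3ab - 2b^2$ reduces this to $2b > b$, which is trivially true for $b > 0$. Combined with $g(2a) > 0$ from Lemma \ref{lem:a_identity}, this yields $g(b) > 0$ for all admissible $b \ge 2a$, proving the lemma.

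The main substantive input is Lemma \ref{lem:a_identity}, which anchors the argument at the boundary $b = 2a$; the monotonicity step is essentially an algebraic identity and presents no real obstacle. The only thing one needs to double-check is that the hypotheses guarantee the domain on which both $H(a+b)$ and $H(a+2b)$ are well-defined and the Lemma \ref{lem:a_identity} hypothesis $a < 1/5$ holds, and both follow immediately from $b \ge 2a$ together with $a + 2b < 1$.
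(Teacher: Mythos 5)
Your proof is correct and follows essentially the same route as the paper's: fix $a$, anchor at $b = 2a$ via Lemma~\ref{lem:a_identity}, and show $g'(b) > 0$ by reducing the logarithm's argument to an elementary polynomial inequality. The only addition you make beyond the paper is the (worthwhile) sanity check that the hypotheses force $a < 1/5$, which is indeed what Lemma~\ref{lem:a_identity} requires.
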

\begin{proof}
	Fix $a < 1$ and define $g(b) = 2H(a+b) - 3H(a) + H(2a) - H(a+2b)$. By Lemma \ref{lem:a_identity} we have $g(2a) = 2H(3a) - 3H(a) + H(2a) - H(5a) > 0$. It therefore suffices to show that $g'(b) > 0$ for all $2a \le b < \frac{1-a}{2}$. We have,
	\begin{align*}
	g'(b) &= -2\log\left(\frac{a+b}{1-(a+b)}\right) + 2\log\left(\frac{a+2b}{1-(a+2b)}\right) \\ &= 2\log\left(\frac{(1-(a+b))(a+2b)}{(a+b)(1-(a+2b))}\right) \\ &> 0
	\end{align*}
	
	where the inequality follows since
	\begin{align*}
	(1-(a+b))(a+2b) &= a+2b - (a+b)(a+2b) \\ &> a+b - (a+b)(a+2b) \\ &= (a+b)(1-(a+2b))
	\end{align*}
\end{proof}

\begin{lem}
	\label{lem:ab_identity2}
	Let $0 < a,b < 1$ such that $a+4b < 1$. Then $$H(a+2b) - H(a+b) -\frac{1}{4}H(a+4b) + \frac{1}{4}H(a) > 0$$ 
\end{lem}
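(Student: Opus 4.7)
The plan is to fix $a\in(0,1)$ and treat the left-hand side as a single-variable function of $b$,
\[
g(b)\;=\;H(a+2b)-H(a+b)-\tfrac14 H(a+4b)+\tfrac14 H(a),
\]
on the admissible interval $0\le b<(1-a)/4$. Since $g(0)=0$, it suffices to establish the strict derivative inequality $g'(b)>0$ for all $b>0$.

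Using the standard formula $H'(x)=\log\bigl((1-x)/x\bigr)$, the derivative collapses into a single logarithm:
\[
g'(b)\;=\;2H'(a+2b)-H'(a+b)-H'(a+4b)\;=\;\log\frac{(1-a-2b)^2(a+b)(a+4b)}{(a+2b)^2(1-a-b)(1-a-4b)}.
\]
So the whole problem reduces to the purely polynomial inequality $(1-a-2b)^2(a+b)(a+4b)>(a+2b)^2(1-a-b)(1-a-4b)$.

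The key move I would look for is a change of variables that makes this polynomial inequality transparent. Setting $\alpha=a+b$, $\beta=a+2b$, $\gamma=a+4b$, two short direct expansions yield the clean identities
\[
\alpha\gamma-\beta^2=ab,\qquad (1-\alpha)(1-\gamma)-(1-\beta)^2=-b(1-a).
\]
Plugging these into the difference between the two sides of the polynomial inequality, the $\beta^2(1-\beta)^2$ contributions cancel and one is left with
\[
(1-\beta)^2\,\alpha\gamma\;-\;\beta^2\,(1-\alpha)(1-\gamma)\;=\;ab(1-\beta)^2+b(1-a)\beta^2,
\]
which is manifestly strictly positive since $a,b>0$ and $0<\beta=a+2b<1$. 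This gives $g'(b)>0$, and combined with $g(0)=0$ this yields the lemma.

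The main obstacle is really just spotting the two ``near arithmetic progression'' identities for $\alpha,\beta,\gamma$; without them one faces a messy quartic expansion whose positivity is not obvious. Once those identities are in hand, the rest is a routine derivative computation plus manifest positivity, so I do not anticipate further difficulty.
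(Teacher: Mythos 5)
Your proof is correct and follows the same strategy as the paper: fix $a$, note $g(0)=0$, show $g'(b)>0$, reduce to the polynomial inequality $(1-a-2b)^2(a+b)(a+4b)>(a+2b)^2(1-a-b)(1-a-4b)$, and exploit the two identities $(a+b)(a+4b)-(a+2b)^2=ab$ and $(1-a-2b)^2-(1-a-b)(1-a-4b)=b(1-a)$. The paper finishes by comparing the two pairs of factors separately (each ratio exceeds $1$), whereas you combine them into the single expression $ab(1-\beta)^2+b(1-a)\beta^2>0$; this is only a cosmetic difference.
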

\begin{proof}
	Fix $a < 1$ and define $g(b) = H(a+2b) - H(a+b) -\frac{1}{4}H(a+4b) + \frac{1}{4}H(a)$. We have $$g(0) = H(a) - H(a) -\frac{1}{4}H(a) + \frac{1}{4}H(a) = 0$$
	and therefore it suffices to show that $g'(b) > 0$ for all $b < \frac{1-a}{4}$. We have,
	\begin{align*}
	g'(b) &= -2\log\left(\frac{a+2b}{1-(a+2b)}\right) +  \log\left(\frac{a+b}{1-(a+b)}\right) + \log\left(\frac{a+4b}{1-(a+4b)}\right) \\ &= \log\left(\frac{(1-(a+2b))^2(a+b)(a+4b) }{(a+2b)^2(1-(a+b))(1-(a+4b))}\right)
	\end{align*}
	We first notice that $$(a+b)(a+4b) = a^2 + 5ab+4b^2 > a^2 + 4ab + 4b^2 = (a+2b)^2$$
	Thus, to finish the proof, it suffices to show that 
	\begin{align*}
	\Delta \triangleq (1-(a+2b))^2 - (1-(a+b))(1-(a+4b)) > 0
	\end{align*} 
	which is true because
	\begin{align*}
	\Delta &= 1 - 2a -4b + a^2 + 4ab + 4b^2 - (1-2a-5b + a^2 + 5ab + 4b^2) \\ &= b - ab > 0
	\end{align*}
\end{proof}

\begin{lem}
	\label{lem:ab_identity3}
	Let $0 < a \le \frac{1}{8}$ and $b = \frac{1}{4}(1-a)$. Then $$\Delta = H(a+2b) - H(a+b) + \frac{1}{4}H(a)> 0$$
\end{lem}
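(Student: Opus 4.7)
The plan is to substitute $b = (1-a)/4$ so that $a+b = (1+3a)/4$ and $a+2b = (1+a)/2$, reducing $\Delta$ to a function of the single variable $a$,
\[
\Delta(a) = H\!\left(\tfrac{1+a}{2}\right) - H\!\left(\tfrac{1+3a}{4}\right) + \tfrac14 H(a),
\]
and then show that $\Delta$ is strictly increasing and then strictly decreasing on $[0, 1/8]$, so that its minimum over $[0, 1/8]$ is attained at one of the two endpoints, both of which I will verify are positive.

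Using $H'(x) = \log\frac{1-x}{x}$ and observing that the $\log(1-a)$ contributions cancel (their coefficients sum to $\tfrac12 - \tfrac34 + \tfrac14 = 0$), I would obtain
\[
\Delta'(a) = \log \frac{(1+3a)^{3/4}}{3^{3/4}\,(1+a)^{1/2}\,a^{1/4}}.
\]
Differentiating once more and clearing the positive denominator $4a(1+a)(1+3a)$, the sign of $\Delta''(a)$ reduces (after an algebraic collapse) to the sign of $3a - 1$, which is strictly negative on $(0, 1/8] \subset (0, 1/3)$. Hence $\Delta'$ is strictly decreasing. Since $\Delta'(a) \to +\infty$ as $a \to 0^+$ (driven by the $-\tfrac14 \log a$ term) and a direct simplification gives $\Delta'(1/8) = \tfrac14 \log(11^3/3^7) = \tfrac14 \log(1331/2187) < 0$, the derivative $\Delta'$ has a unique sign change in $(0, 1/8)$, which yields the claimed unimodal shape of $\Delta$.

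It remains to verify positivity at the two endpoints. At $a = 0$ we have $\Delta(0) = H(1/2) - H(1/4) = 1 - H(1/4) > 0$ because $H$ attains its unique maximum $1$ at $1/2$. At $a = 1/8$, using the symmetry $H(p) = H(1-p)$ to rewrite $H(11/32) = H(21/32)$, we get
\[
\Delta(1/8) = \bigl[H(9/16) - H(21/32)\bigr] + \tfrac14 H(1/8),
\]
and both terms are positive: the second because $0 < 1/8 < 1/2$, and the first because $H$ is strictly decreasing on $[1/2, 1]$ while $1/2 < 9/16 < 21/32$.

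The main obstacle I anticipate is the reduction of the sign of $\Delta''(a)$ to that of $3a - 1$: this hinges on an algebraic cancellation that is easy to miscompute but is the crux of the monotonicity argument. Once that is in hand, the unimodality conclusion and both endpoint checks are short and elementary; in particular, the symmetry trick $H(11/32) = H(21/32)$ makes the $a = 1/8$ evaluation clean and avoids any numerical bounds on logarithms.
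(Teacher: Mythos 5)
Your proof is correct---I checked the derivative computations, and indeed $\Delta''(a)$ reduces to $\frac{3a-1}{4a(1+a)(1+3a)}$, which is negative on $(0,1/3)$, so the unimodality argument and the two endpoint evaluations go through. However, it takes a genuinely different and considerably more involved route than the paper's. The paper's proof is a one-liner: write $H(a+2b) = H\left(\tfrac{1}{2}a + \tfrac{1}{2}\right) = H\left(\tfrac{1}{2} - \tfrac{1}{2}a\right)$ by symmetry, and observe that $\tfrac{1}{2} - \tfrac{1}{2}a \ge \tfrac{3}{4}a + \tfrac{1}{4} = a+b$ holds for $a \le \tfrac{1}{5}$; since both quantities lie in $[0,\tfrac12]$ where $H$ is increasing, this gives $H(a+2b) - H(a+b) \ge 0$ over the entire range, whence $\Delta \ge \tfrac14 H(a) > 0$ immediately. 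What your approach buys is a more refined picture (it establishes that $\Delta$ is unimodal on $(0,1/8]$), but at the cost of two derivative calculations, a limiting argument as $a\to 0^+$, and an explicit numerical check that $1331 < 2187$. The amusing part is that your verification of $\Delta(1/8)>0$ already contains the paper's whole idea---use $H(p)=H(1-p)$ and then monotonicity of $H$---but applied only at the single point $a=1/8$. Running that same two-line argument at every $a \in (0,1/8]$ recovers the paper's proof directly and renders the calculus unnecessary.
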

\begin{proof}
	Since $H\left(\frac{1}{2}a+\frac{1}{2}\right) = H\left(\frac{1}{2} - \frac{1}{2}a\right)$ and $\frac{1}{2} - \frac{1}{2}a \ge \frac{3}{4}a+\frac{1}{4}$ for $a \le \frac{1}{8}$, it follows that $$H\left(\frac{1}{2}a+\frac{1}{2}\right) \ge H\left(\frac{3}{4}a+\frac{1}{4}\right)$$
	Therefore,  
	$$\Delta = H\left(\frac{1}{2}a+\frac{1}{2}\right) - H\left(\frac{3}{4}a+\frac{1}{4}\right) + \frac{1}{4}H(a) \ge \frac{1}{4}H(a) > 0$$ 
\end{proof}

\begin{lem}
	\label{lem:numerical}
	Let $x = \frac{1}{2^k}$ and $y = \frac{1}{2^m}$ where $m$ and $l$ are integers such that $1 \le k < m$. Then the following three inequalities hold.
	\begin{enumerate}
		\item $ \frac{1}{2}H(x+y-xy) - \frac{3}{4}H(y) + \frac{1}{4}H(2y) - \frac{1}{4}H\left(2x+y-2xy\right) > 0$.
		\item If $k \ge 3$ then $H(2x+y-2xy) + \frac{1}{4}H(y) - H(x+y-xy) - \frac{1}{4}H(4x+y-4xy) > 0$.
		\item If $k=2$ then $H(2x+y-2xy) + \frac{1}{4}H(y) - H(x+y-xy) > 0$.
		\item $H(2y) - H(y) - \frac{1}{4}H(4y) > 0$.
	\end{enumerate}	  
\end{lem}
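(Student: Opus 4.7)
The plan is to reduce all four inequalities to previously established entropy inequalities via the substitution $a := y$ and $b := x(1-y)$. Since $1-(1-x)(1-y) = x+y-xy$, this yields
\[x+y-xy = a+b,\quad 2x+y-2xy = a+2b,\quad 4x+y-4xy = a+4b,\quad 2y = 2a.\]
The hypothesis $k < m$ (equivalently $x \ge 2y$) then becomes $b \ge 2a(1-a)$, with equality iff $k = m-1$.

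Inequality (4) follows from Lemma~\ref{lem:ab_identity2} applied with $a=0$, $b=y$. The proof of that lemma extends verbatim to $a=0$: we have $g(0)=0$ and a direct computation gives $g'(b) = \log\tfrac{(1-2b)^2}{(1-b)(1-4b)}$, whose numerator minus denominator equals $b > 0$, so $g' > 0$ for $b > 0$ with $4b \le 1$. The boundary case $m = 2$ (so $4y = 1$) is handled by continuity, using $H(1) = 0$.

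For (2) and (3), the substitution turns the LHS of (2) into exactly $H(a+2b) - H(a+b) + \tfrac14 H(a) - \tfrac14 H(a+4b)$, the expression of Lemma~\ref{lem:ab_identity2}. The lemma's hypothesis $a+4b<1$ corresponds to $4x<1$, i.e., $k\ge 3$, matching (2). For (3), where $k=2$, we have $a+4b=1$ and hence $H(a+4b)=0$; the proof of Lemma~\ref{lem:ab_identity2} actually establishes strict monotonicity of the corresponding $g$ on $[0,(1-a)/4]$, so $g((1-a)/4)>0$ by continuity, yielding (3).

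For (1), set $g_a(b) := \tfrac12 H(a+b) - \tfrac34 H(a) + \tfrac14 H(2a) - \tfrac14 H(a+2b)$; the LHS of (1) is $g_a(x(1-y))$. A direct computation gives
\[g_a'(b) = \tfrac12 \log\frac{(a+2b)(1-a-b)}{(a+b)(1-a-2b)},\]
and expansion shows $(a+2b)(1-a-b) - (a+b)(1-a-2b) = b > 0$, so $g_a$ is strictly increasing on $(0,(1-a)/2]$. It therefore suffices to verify positivity at the smallest admissible value $b = 2a(1-a)$, attained for $k = m-1$. There, $a+b = 3a-2a^2$ and $a+2b = 5a-4a^2$, so $4\,g_a(2a(1-a))$ coincides with the LHS of Lemma~\ref{lem:a_identity2}, which is positive since $a = 1/2^m$ with $m \ge 2$. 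The main obstacle is recognizing that the minimal admissible $b$ is $2a(1-a)$ rather than $2a$, so that the correct endpoint comparison for (1) is Lemma~\ref{lem:a_identity2} (with its $-2a^2$ and $-4a^2$ corrections) rather than the simpler Lemma~\ref{lem:a_identity}; once the substitution and worst-case $b$ are identified, everything reduces to the earlier lemmas.
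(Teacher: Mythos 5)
Your proposal is correct and follows the paper's high-level plan (substitute $a=y$, $b=x(1-y)$ so that $x+y-xy=a+b$, $2x+y-2xy=a+2b$, $4x+y-4xy=a+4b$, and reduce each inequality to one of the earlier entropy lemmas), but you streamline the details in three ways worth noting. For part (1), the paper splits into cases: $k<m-1$ via Lemma~\ref{lem:ab_identity1} (which itself relies on Lemma~\ref{lem:a_identity}) and $k=m-1$ via Lemma~\ref{lem:a_identity2}; you instead prove directly that $g_a(b)$ is strictly increasing in $b$ and check only the minimal endpoint $b=2a(1-a)$, which corresponds to $k=m-1$ and to Lemma~\ref{lem:a_identity2}. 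This collapses the case split and makes Lemma~\ref{lem:a_identity} and Lemma~\ref{lem:ab_identity1} unnecessary here, although the monotonicity computation is essentially the one already buried inside the proof of Lemma~\ref{lem:ab_identity1}. For part (3), the paper invokes a separate Lemma~\ref{lem:ab_identity3}; you instead reuse Lemma~\ref{lem:ab_identity2} by continuity of $H$ at the boundary $a+4b=1$, which is also valid and avoids a dedicated lemma. For part (4), the paper applies Lemma~\ref{lem:ab_identity2} with $a=0$ even though that lemma is stated under the hypothesis $0<a$; you correctly flag this and verify that the lemma's proof goes through verbatim at $a=0$, plus handle the $4y=1$ boundary by continuity --- a genuine improvement in rigor over the paper's terse citation. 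Overall, same substitution and same reliance on the earlier entropy inequalities, but a cleaner and slightly more careful packaging.
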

\begin{proof}
	\begin{enumerate}
		\item For $k < m-1$ it holds that $x-xy \ge 2y$. Therefore, in this case the identity follows by plugging $a=y$ and $b=x-xy$ in Lemma \ref{lem:ab_identity1}. If $k=m-1$ then $x-xy = 2y-2y^2$ and the identity follows by Lemma \ref{lem:a_identity2}.
		\item This follows by plugging $a=y$ and $b=x-xy$ in Lemma \ref{lem:ab_identity2}.
		\item This follow by Lemma \ref{lem:ab_identity3} with $a=y$.
		\item This follows by plugging $a=0$ and $b=y$ in Lemma \ref{lem:ab_identity2}. 
	\end{enumerate}
\end{proof}

We now proceed to show that $T_t = B_t$.  We need to show that \topdown\,first builds the right-path of the tree and then builds all left sub-trees of nodes in the right-path from top to bottom. The proof has three parts:

\underline{Part 1:} Let $v_1$ and $v_2$ be the right child and left child of the root, respectively. We first show that $W(v_1) \ge W(v_2)$. Recall that by the results in \citep{fiat2004decision}, \topdown\,chooses an $x$-variable as its root (this is WLOG if $l=m$).

Denote $x = \frac{1}{2^{l-1}}$ and $y = \frac{1}{2^m}$. Then,
\begin{align*}
W(v_1) = \frac{1}{2}H\left(x+y-xy\right) - \frac{1}{4}H\left(y\right) - \frac{1}{4}H\left(2x+y-2xy\right) \numberthis \label{eq:left_child}
\end{align*}
and
\begin{align*}
W(v_2)=\frac{1}{2}H(1-y) - \frac{1}{4}H(1-2y) \numberthis \label{eq:right_child}
\end{align*}

Therefore, $$W(v_1) - W(v_2) = \frac{1}{2}H(x+y-xy) - \frac{3}{4}H(y) + \frac{1}{4}H(2y) - \frac{1}{4}H\left(2x+y-2xy\right)$$
which is positive by part 1 of Lemma \ref{lem:numerical}.

\underline{Part 2:} Next, let $v$ be a node on the right path and let $u$ be its parent. We will show that $W(v) \ge W(u)$. First assume that $l\ge 3$. Let $1 \le k \le l-2$ be the depth of $u$, $x = \frac{1}{2^{l-k}}$ and $y=\frac{1}{2^m}$. Then
\begin{align*}
2^k W(v) = \frac{1}{2}H(2x+y-2xy) - \frac{1}{4}H(y) - \frac{1}{4}H(4x+y-4xy)
\end{align*}
and
\begin{align*}
2^k W(u) = H(x+y-xy) - \frac{1}{2}H(y) - \frac{1}{2}H(2x+y-2xy)
\end{align*}

Therefore:

\begin{align*}
2^k W(v) - 2^k W(u) &= H(2x+y-2xy) + \frac{1}{4}H(y) - H(x+y-xy)  -\frac{1}{4}H(4x+y-4xy) \\ &> 0
\end{align*}

by part 2 of Lemma \ref{lem:numerical}. Therefore, $W(v) > W(u)$. 

If $l=2$ then $$2^k W(v) = \frac{1}{2}H(2x+y-2xy) - \frac{1}{4}H(y)$$ and therefore,
\begin{align*}
2^k W(v) - 2^k W(u) &= H(2x+y-2xy) + \frac{1}{4}H(y) - H(x+y-xy)  \\ &> 0
\end{align*}
by part 3 of Lemma \ref{lem:numerical}.

\underline{Part 3:}  Let $v$ be a non-root node of a left sub-tree of a node in the right-path. Let $u$ be its parent. Let $y = \frac{1}{2^{m-k}}$ where $m \ge 2$ and $ 0 \le k \le m-2$ is the depth of $u$ in the left sub-tree. We have:

\begin{align*}
2^{d(u)}\left(W(v) - W(u)\right) &= \frac{1}{2}H(2y) - \frac{1}{4}H(4y) - \left(H(y) - \frac{1}{2}H(2y)\right) \\ &= H(2y) - H(y) - \frac{1}{4}H(4y) > 0
\end{align*}
where the inequality follows by part 4 of Lemma \ref{lem:numerical}. Therefore, $W(v) > W(u)$.

\underline{Finishing the proof:} By part 1 and part 2, \topdown\,will first grow the right path of the tree before expanding any left sub-tree. To see this, let $v_1$ and $v_2$ be two left children of nodes in the right-path and $v_1$ is in a higher left sub-tree. Note that $v_1$ and $v_2$ are $y$-nodes. Then, it holds that $W(v_1) > W(v_2)$. Thus, by parts 1 and 2 the weighted gain of any node on the right path is larger than the weighted gain of any left child of a node in the right-path.

Next, after completing the right path, \topdown\,will expand the left child of the root (and not other roots of left-sub trees by the previous argument). By part 3, it will then expand the highest left sub-tree. By the same argument, it will continue to expand each left sub-tree from top to bottom.

\subsection{Proof of Proposition \ref{prop:bt_optimal}}

We first need to prove several lemmas.
\begin{lem}
	\label{lem:optimal_tree_basic}
	For any DNF with at most 2 terms, there exists an optimal tree that consists only of $x$-nodes or $y$-nodes. Furthermore, 
	\begin{enumerate}
		\item In the case of one term, the tree is in $\mc_1$.
		\item In the case of 2 terms, the tree is in $\mc_2$. 
	\end{enumerate}
\end{lem}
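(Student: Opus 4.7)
The plan is to prove both parts by strong induction on the tree size $t$, with Part~1 serving as the base case for Part~2. The central engine is that, under the product (here uniform) distribution, the test error decomposes cleanly along any root-to-node path: a subtree rooted at an internal node $v$ contributes $p(v)$ times the conditional error of that subtree on the residual sub-problem at $v$. Consequently, replacing a subtree by any no-larger tree with no-larger error for the same sub-problem weakly improves $E(T)$, which is what licenses the local surgery used below.

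For Part~1, fix the single-term DNF $x_1 \wedge \cdots \wedge x_l$ and an optimal $T \in \mt_t$. If $T$ is a leaf, it lies in $\mc_1$ trivially. If the root is an $o$-node, both children face the identical sub-problem and $E(T) = \tfrac12(E(T_L)+E(T_R)) \ge \tfrac12(OPT(s_L)+OPT(s_R))$ with $s_L,s_R \le t-1$; since $OPT(\cdot)$ is non-increasing, the optimum at size $\max(s_L,s_R) \le t-1$ is already at most $E(T)$, and the inductive hypothesis delivers a $\mc_1$-tree of size $\le t-1$ achieving it. If the root splits on some $x_i$ (WLOG $x_1$), the left child faces the constant-zero function and is replaceable by a single leaf labeled~$0$ without increasing error; the right child faces $x_2 \wedge \cdots \wedge x_l$ with budget $t-1$, which by the inductive hypothesis admits a right $x$-path in $\mc_1$. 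Gluing these yields the desired right $x$-path.

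For Part~2, fix a two-term DNF and an optimal $T \in \mt_t$. The $o$-node case is handled exactly as in Part~1, using the Part~2 inductive hypothesis at smaller $t$. If the root is an $x$-node (WLOG $x_1$; the $y$-node case is symmetric), the left child faces the single-term DNF $y_1 \wedge \cdots \wedge y_m$, and Part~1 supplies an optimal left subtree in $\mc_1$ with $y$-nodes. The right child faces the residual two-term DNF $(x_2 \wedge \cdots \wedge x_l) \vee (y_1 \wedge \cdots \wedge y_m)$, or the constant~$1$ if $l=1$ (for which a single leaf labeled~$1$ is optimal), in either case with budget $t-1$; the inductive hypothesis for Part~2 then supplies an optimal right subtree in $\mc_2$. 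Assembling an $x$-node root with a $\mc_1$-($y$-node) left subtree and a $\mc_2$ right subtree produces a tree that matches the recursive definition of $\mc_2$.

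The main obstacle is tracking sizes cleanly through the surgery: shrinking a subtree to a single leaf (Case 1b/2b) can leave the overall tree strictly smaller than $t$, and the $o$-node removal similarly shrinks the tree. I sidestep this by phrasing the induction as ``there exists a tree in $\mt_t \cap \mc_1$ (resp.\ $\mc_2$) whose error equals $OPT(t)$,'' which tolerates smaller-than-$t$ witnesses since $\mt_t$ contains all trees of size at most $t$. If one instead insists on trees of exactly $t$ nodes, the freed budget can be reabsorbed by appending further splits along the already-built $\mc_1$ or $\mc_2$ skeleton (extending the right $x$-path, or growing a constituent $y$-subtree along its right-path); such extensions only weakly decrease error and preserve membership in $\mc_1$/$\mc_2$, so no structural issue arises.
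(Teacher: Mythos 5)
Your proof is correct and follows essentially the same approach as the paper's: both proceed by induction with a case analysis on the root of an optimal tree, replacing $T$ by the better subtree when the root variable is absent from the (possibly residual) DNF, and recursing on the two children when the root variable is present — though note your ``$o$-node'' case should be read to include any split on a variable not in the current residual DNF (e.g., leftover $x$- or $y$-variables of $f$), not just $o$-variables of $f$, since the same ``identical sub-problem'' argument handles all of them. The paper runs a single joint induction on $t+s$ (tree size plus number of literals) rather than your two-stage induction on $t$ alone, but since $t$ strictly decreases at every recursion both bookkeepings are valid, and the paper likewise resolves the size-shrinkage issue by padding with extra splits.
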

\begin{proof}
	We prove the claim by induction on $t+s$ where $t$ is the size of the tree and $s$ is the number of literals in the DNF. For $t=s=1$ the claim holds. Assume it holds for $t+s$. First assume that the DNF has one term with $y$-variables. Consider the root of an optimal tree $T$. If it is not in the DNF, then both $T_R$ and $T_L$ correspond to the ground-truth DNF and are of size less than $t$. By induction, they are WLOG right-paths that consist of only $y$-nodes. Assume WLOG that $E(T_R) \le E(T_L)$. Then $E(T) = \frac{1}{2}E(T_L) + \frac{1}{2}E(T_R) \ge E(T_R)$, where the equality follows by the fact that the DNF of the root of $T_L$ and the DNF of the root of $T_R$ are equal to the ground-truth DNF, and that the variables are independent. Thus by replacing $T$ with $T_R$ we get a new tree in $\mc_1$ with less than $t$ nodes and did not increase the test error. We can add more nodes to get a tree of size $t$ and not increase the test error.
	
	If the root node is in the DNF then its left sub-tree is WLOG a leaf, because a single leaf is optimal for a constant function. By induction, its right sub-tree is WLOG a right-path, and therefore the whole tree is in $\mc_1$. 
	
	Now, assume that the DNF has two terms. Consider the root of an optimal tree. If it is not in the DNF, then both its right and left sub-tree correspond to the ground-truth DNF and are of size less than $t$. By induction they are both in $\mc_2$ and consist only of $x$-nodes and $y$-nodes. Assume WLOG that $E(T_R) \le E(T_L)$. Then $E(T) = \frac{1}{2}E(T_L) + \frac{1}{2}E(T_R) \ge E(T_R)$, where the equality follows by the fact that the DNFs of $T_L$ and $T_R$ are the ground-truth DNF. Thus by replacing $T$ with $T_R$ we get a new tree in $\mc_2$ with less than $t$ nodes and did not increase the test error. We can add more nodes to get a tree of size $t$ and not increase the test error.
	
	If the root is in the DNF, by induction the left sub-tree is in $\mc_1$ and the right sub-tree in $\mc_2$. Therefore, the tree is in $\mc_2$. Furthermore, all nodes are $x$-nodes or $y$-nodes by induction.
\end{proof}

\begin{lem}
	\label{lem:cv_calc}
	Let $v$ be an $x$-node with $F(v) = (x_1 \wedge x_2 \wedge \cdots \wedge x_{k_1}) \vee (y_1 \wedge y_2 \wedge \cdots \wedge y_{k_2})$. Then the following holds: \footnote{Analogous claims hold when $v$ is a $y$-node. We also use the notation that for $k_2=0$ we get a one-term DNF with $x$-variables.}
	\begin{enumerate}
		\item If $k_1 = 1$, $k_2 = 0$, then $C(v) = \frac{1}{2}$.
		\item If $k_2 \ge 2$, $k_1 = 2$, then $C(v) = \frac{1}{2^{k_2}} - \frac{1}{2^{k_2+1}}$.
		\item If $k_2 \ge 2$, $k_1 = 1$, then $C(v) = \frac{1}{2} - \frac{1}{2^{k_2}}$.
		\item If $k_1 > 1$ and $k_2 = 0$ or $k_1,k_2 > 2$, then $C(v) = 0$.
	\end{enumerate}  
\end{lem}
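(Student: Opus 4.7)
The plan is to compute $C(v)$ directly from its definition
\[C(v) \;=\; C(q(v)) - \tfrac{1}{2}C(q(v_0)) - \tfrac{1}{2}C(q(v_1)),\]
where the weights $1/2$ come from the fact that under the uniform distribution each child of an $x$-node (or $y$-node) is reached with conditional probability $1/2$. Since $v$ is an $x$-node, it is split on some $x$-variable, and by symmetry of the uniform distribution we may take the splitting variable to be $x_1$ without loss of generality.

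First I would write down the three probabilities in closed form using inclusion--exclusion for a disjunction of two independent conjunctions, treating the degenerate cases $k_1=0$, $k_1=1$, $k_2=0$, $k_2=1$ separately in the formula:
\begin{align*}
q(v)   &= \tfrac{1}{2^{k_1}} + \tfrac{1}{2^{k_2}} - \tfrac{1}{2^{k_1+k_2}},\\
q(v_0) &= \tfrac{1}{2^{k_2}} \quad (\text{or }0\text{ if }k_2=0),\\
q(v_1) &= \tfrac{1}{2^{k_1-1}} + \tfrac{1}{2^{k_2}} - \tfrac{1}{2^{k_1+k_2-1}},
\end{align*}
with the appropriate collapse ($q(v_1)=1$ when $k_1=1$, $q(v_1)=1/2^{k_1-1}$ when $k_2=0$, etc.).

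Next I would go through the four cases in turn, in each case determining on which side of $\tfrac12$ each of $q(v)$, $q(v_0)$, $q(v_1)$ lies, so as to replace $C(\cdot)=\min(\cdot,1-\cdot)$ by the correct linear expression. For instance, in case 2 ($k_1=2$, $k_2\ge 2$) one has $q(v_0)=1/2^{k_2}\le 1/4$, $q(v_1)=1/2+1/2^{k_2+1}\ge 1/2$, and $q(v)=1/4+3/2^{k_2+2}<1/2$, which fixes the signs; in case 4 with $k_1,k_2>2$, all three values lie below $1/2$, so $C$ acts as the identity. After that substitution each case is a routine arithmetic identity (the telescoping in cases 1 and 4 is particularly clean), and matches the claimed value.

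No step here is genuinely hard: the proof is essentially a bookkeeping exercise on a handful of fractions of the form $1/2^k$. The only place to be careful is the boundary behavior of the formulas when $k_1$ or $k_2$ equals $0$ or $1$, which is why the lemma splits off the small-$k_i$ configurations (cases 1--3) from the generic "both terms large" configuration (case 4); ensuring that each of the four stated cases is handled by the corresponding closed-form evaluation, and that the listed cases exhaust all configurations for which an $x$-node makes sense, is the main thing to verify.
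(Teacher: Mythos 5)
Your proposal is correct and takes essentially the same route as the paper: both proofs expand $C(v) = C(q(v)) - \tfrac12 C(q(v_0)) - \tfrac12 C(q(v_1))$, compute the three probabilities for a two-term read-once DNF under the uniform distribution, resolve $C(\cdot)=\min(\cdot,1-\cdot)$ by comparing each to $\tfrac12$, and verify the resulting arithmetic identity in each case. The paper presents the four cases as direct substitutions without further commentary, exactly as you describe.
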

\begin{proof}
	By direct calculation we get:
	\begin{enumerate}
		\item $C(v) = C\left(\frac{1}{2}\right) - \frac{1}{2}C(1) - \frac{1}{2}C(0) = \frac{1}{2}$.
		\item \begin{align*}C(v) &= C\left(\frac{1}{2^{k_1}} + \frac{1}{2^{k_2}} - \frac{1}{2^{k_1+k_2}}\right) - \frac{1}{2}C\left(\frac{1}{2^{k_1-1}} + \frac{1}{2^{k_2}} - \frac{1}{2^{k_1+k_2-1}}\right) -\frac{1}{2}C\left(\frac{1}{2^{k_2}}\right) \\ &= \frac{1}{4} + \frac{1}{2^{k_2}} - \frac{1}{2^{2+k_2}}	- \frac{1}{2^{k_2+2}} + \frac{1}{2^{k_2+1}} - \frac{1}{4} - \frac{1}{2^{k_2+1}}	\\ &= \frac{1}{2^{k_2}} - \frac{1}{2^{k_2+1}}		\end{align*}
		\item \begin{align*}C(v) &= C\left(\frac{1}{2^{k_1}} + \frac{1}{2^{k_2}} - \frac{1}{2^{k_1+k_2}}\right)  - \frac{1}{2}C(1)  -\frac{1}{2}C\left(\frac{1}{2^{k_2}}\right) \\ &= 
		\frac{1}{2^{k_2+1}}
		+ \frac{1}{2} - \frac{1}{2^{k_2}} -  \frac{1}{2^{k_2+1}}	\\ &= \frac{1}{2} - \frac{1}{2^{k_2}}	\end{align*}
		\item In the first case,
		$C(v) = C\left(\frac{1}{2^{k_1}}\right) - \frac{1}{2}C\left(\frac{1}{2^{k_1-1}}\right) - \frac{1}{2}C(0) = 0$ 
		
		In the second case, \begin{align*}C(v) &= C\left(\frac{1}{2^{k_1}} + \frac{1}{2^{k_2}} - \frac{1}{2^{k_1+k_2}}\right) - \frac{1}{2}C\left(\frac{1}{2^{k_1-1}} + \frac{1}{2^{k_2}} - \frac{1}{2^{k_1+k_2-1}}\right) -\frac{1}{2}C\left(\frac{1}{2^{k_2}}\right) \\ &= \frac{1}{2^{k_1}} + \frac{1}{2^{k_2}} - \frac{1}{2^{k_1+k_2}} -
		\frac{1}{2^{k_1}} - \frac{1}{2^{k_2+1}} + \frac{1}{2^{k_1+k_2}}
		-\frac{1}{2^{k_2+1}}
		\\ &= 0
		\end{align*}
	\end{enumerate}
\end{proof}

\begin{lem}
	\label{lem:right_xnode_tree}
	Assume that $T \in \mc_2$ has $t$ nodes, has optimal test error among all trees in $\mt_t$ and its right path consists only of $x$-nodes. Then $C(T) = C(B_t)$.
\end{lem}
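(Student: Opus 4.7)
The plan is to exploit the near-triviality of the per-node error reductions on trees of this shape: Lemma \ref{lem:cv_calc} lets us express both $C(T)$ and $C(B_t)$ in closed form, after which an exchange argument shows that $B_t$ is the maximizer of $C$ over all trees in $\mc_2$ whose right path consists only of $x$-nodes. Since $T$ is assumed to be optimal in $\mt_t$ (hence has maximal $C$ by \Eqref{eq:et_ct}) and $B_t \in \mt_t$, we automatically get $C(T) \ge C(B_t)$; conversely, the maximality claim for $B_t$ combined with $T$ being in the class just described gives $C(T) \le C(B_t)$, and the two inequalities yield the lemma.

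To make this concrete, I would parameterize any $T \in \mc_2$ with right path of $x$-nodes by $(k, s_1, \ldots, s_k)$, where $k \le l$ is the number of $x$-nodes on the right path and $s_i \le m$ is the length of the right $y$-path $L_i$ hanging off the $i$-th right-path $x$-node $v_i$, with $k + \sum_i s_i = t$. The DNF at $v_i$ is $(x_i \wedge \cdots \wedge x_l) \vee (y_1 \wedge \cdots \wedge y_m)$, so Lemma \ref{lem:cv_calc} yields $C(v_i) = 0$ unless $i \in \{l-1, l\}$ (in the generic case $l, m \ge 3$; the small-$l$ or small-$m$ regimes are handled separately). For each $y$-node inside $L_i$ the DNF is a single term of decreasing length, so the lemma gives a nonzero reduction only at the deepest node of a \emph{full} $y$-path (one with $s_i = m$), and the contribution there is the product of the probability $2^{-i}$ of reaching the left child of $v_i$, the factor $2^{-(m-1)}$ of descending to the deepest $y$-node, and the $C$-value $\tfrac12$, i.e.\ $2^{-i-m}$. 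Consequently,
\[
C(T) \;=\; X(k) + \sum_{i:\, s_i = m} 2^{-i-m},
\]
with $X(k) = 0$ for $k \le l-2$, $X(l-1) = 2^{-l-m+1}$ and $X(l) = 2^{-l}$; the same formula evaluated at $B_t$'s greedy parameters gives $C(B_t)$.

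The next step is to show this expression is maximized by $B_t$'s parameters via two exchange moves. (a) If $L_j$ is full while some shallower $L_i$ ($i < j$) is not, swapping the two subtrees strictly increases $\sum_{i:\, s_i = m} 2^{-i-m}$ (since $2^{-i-m} > 2^{-j-m}$) while preserving the node count and $X(k)$; hence the full indices in any optimal configuration must form an initial segment $\{1, \ldots, r\}$. (b) Using $l \le m$, the gain $X(l) - X(k)$ in going to $k = l$ dominates the corresponding loss in $\sum 2^{-i-m}$ from having at most one fewer $y$-path slot; the critical comparison is $X(l) - X(l-1) = 2^{-l}(1 - 2^{-m+1}) \ge 2^{-m-r-1}$ for every feasible $r$, which rearranges to $2^{m+r+1-l}(1 - 2^{-m+1}) \ge 1$ and follows from $l \le m$. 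Combining (a) and (b) pins down the parameters of any optimal $T$ of the given form to exactly those of $B_t$, yielding $C(T) = C(B_t)$.

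The main obstacle I anticipate is the meticulous case analysis in step (b) across all feasible pairs $(k, r)$ with $k < l$, especially the equality regimes (e.g.\ $l = m = 2$ and $r = 0$) where the inequality becomes tight and any tie in $C$ must be tracked carefully. A secondary source of bookkeeping is the boundary regimes $l \in \{1, 2\}$ or $m \in \{1, 2\}$, where $X(k)$ takes different values and previously zero-contribution $x$- or $y$-nodes become productive; these require parallel computations based on the remaining cases of Lemma \ref{lem:cv_calc}, but the same top-down exchange scheme applies with adjusted constants.
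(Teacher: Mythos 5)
Your approach is correct and is essentially the same as the paper's: both reduce $C(T)$ to the per-node error reductions of Lemma~\ref{lem:cv_calc} (only the last two $x$-nodes on the right path and the deepest node of a full $y$-path contribute), and both use an exchange argument to conclude that $B_t$'s greedy filling is the maximizer over this class, after which the two-sided inequality via \Eqref{eq:et_ct} closes the argument exactly as you describe. The one place the paper is noticeably slicker is your step~(b): rather than balancing $X(l)-X(k)$ against the marginal loss of a full $y$-path over all feasible $(k,r)$, the paper bounds the entire error reduction of any tree with a short right $x$-path ($k < l$) by $2^{-m}$, which already falls below the $2^{-l}$ achieved by a bare full right $x$-path since $l \le m$; this avoids the step-by-step bookkeeping and the potential off-by-one traps in the $r$-indexing that you (correctly) flag as the main hazard. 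Your exchange~(a) and closed-form $C(T)=X(k)+\sum_{i:\,s_i=m}2^{-i-m}$ match the paper's treatment of the $y$-path ordering, and both presentations leave the $l,m \le 2$ boundary regimes similarly terse, so I would not call this a different route --- just a more parameterized rendering of the same ideas.
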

\begin{proof}
	If $t \le l$, then the optimal tree is a right-path with $x$-nodes. This follows since by Lemma \ref{lem:cv_calc}, the only nodes with $C(v) > 0$ in a tree of size less than $l$ are the $l-1$ and $l$ nodes in the right-path.
	
	Assume that $t > l$. By Lemma \ref{lem:optimal_tree_basic}, WLOG each left sub-tree of an $x$-node in the right-path is a right $y$-path. Assume by contradiction that $T$ has an $x$-right-path with less than $l$ nodes. Then by Lemma \ref{lem:cv_calc}, its error reduction is at most
	\begin{equation}
	\label{eq:full_right_xpath}
	\sum_{i=1}^{l-1}{\frac{1}{2^{m+i}}} +  \frac{1}{2^{m+l-1}} = \frac{1}{2^m} - \frac{1}{2^{m+l-1}} + \frac{1}{2^{m+l-1}} = \frac{1}{2^{m}} \le \frac{1}{2^l}
	\end{equation}
	
	where the first summand on the left side is the total test error reduction of all left sub-trees which are right $y$-paths. The second summand is the error reduction of the $l-1$ node in the right $x$-path of the tree. The right-side of \Eqref{eq:full_right_xpath} is the error reduction of a full right $x$-path. Indeed, by Lemma \ref{lem:cv_calc} this is given by the sum of error reduction of nodes $l-1$ and $l$ in the path, which is $\frac{1}{2^{m+l-1}} + \frac{1}{2^l} - \frac{1}{2^{m+l-1}} = \frac{1}{2^l}$. Therefore, a full right $x$-path has error reduction at least as any other tree with no full right $x$-path. We can thus assume WLOG that the optimal tree has a full right $x$-path.
	
	We are left to show that there is no node in a left sub-tree (which is a right $y$-path), unless all left sub-trees above it are full right $y$-paths. Assume by contradiction that this does not hold and consider a left sub-tree $T_L'$ which violates this condition. We claim that by moving the nodes of $T_L'$ to higher left sub-trees can only increase the error reduction. To see this, first consider the case where $T_L'$ is not full. In this case the error reduction of all its nodes is 0. We can therefore remove them without decreasing error reduction. Placing them in higher left sub-trees can only increase the error reduction. 
	
	If $T_L'$ is a full right $y$-path, let $T_L''$ be a higher left sub-tree which is not full (it exists by our assumption). By moving nodes from $T_L'$ to $T_L''$ we can make $T_L''$ full. By part 1 of Lemma \ref{lem:cv_calc}, the error reduction of the whole tree increases by $\frac{1}{2^{m+k_2}} - \frac{1}{2^{m+k_1}} > 0$ where $k_1$ and $k_2$ are the depths of the last nodes in $T_L'$ and $T_L''$, respectively.       
\end{proof}

We now turn to proving the proposition. We prove it by induction on $t$. For $t=1$, \topdown\,chooses an $x$-node which is $B_1$. Assume the claim holds for $t-1$. Let $T$ be the optimal tree with $t$ nodes. We have 4 cases:
\underline{Case 1: The root of $T$ is an $x$-node.}  By the induction hypothesis, there exists an optimal tree $T'$ whose right-most path consists only of $x$-nodes. This follows since we can replace the right sub-tree of the root of $T$ with a tree $B_{t'}$, where $t'$ is the number of inner nodes in the right sub-tree, to get the tree $T'$. By induction, this does not increase the test error. Therefore, by Lemma \ref{lem:right_xnode_tree}, $B_t$ is optimal.

\underline{Case 2: The root of $T$ is a $y$-node and $l=m$.} Similarly to the previous case, we get by the induction hypothesis that there exists an optimal tree $T'$ whose right path consists only of $y$-nodes. By Lemma \ref{lem:right_xnode_tree}, applied to the case where $l=m$ and right $y$-path (which by symmetry can be analyzed the same as the case of a right $x$-path), $B_t$ is optimal.

\underline{Case 3: The root of $T$ is a $y$-node, $l<m$ and the left sub-tree of the root is a full $x$-tree.} The following procedure does not decrease the test error reduction and does not increase the size of the tree: Remove root and left sub-tree and add one $y$-node to each left sub-tree of an $x$-node in the right path. Note that there are at most $l$ $x$-nodes in the right path. Add remaining nodes as $x$-nodes to right path. We will denote by $T'$ the new resulting tree.

First, we note that by adding a $y$-node to each left sub-tree of an $x$-node, the total error reduction of each such left-sub tree does not change. To see this, consider two cases. In the first the left sub-tree of an $x$-node is not full. Then its error reduction is 0 and therefore, removing the root cannot decrease its error reduction. Second, if the left sub-tree is full, then the only node with non-zero error reduction is the last node. Its error reduction is $\frac{1}{2^{d(v)+1}}$. After removing the root and adding a $y$-node, the last node still has error reduction of $\frac{1}{2^{d(v)+1}}$. 

By Lemma \ref{lem:cv_calc} part 1, the error reduction of the left sub-tree of $T$ is $\frac{1}{2^{l+1}}$. Denote by $D$ the difference between the error reduction of the right-path of $T'$ and the error reduction of the right-path of $T$. It suffices to show that $D \ge \frac{1}{2^{l+1}}$. Let $C_r$ be the error reduction of the right-path of $T$. Since $T$ has a root $y$-node, we get by Lemma \ref{lem:cv_calc} that $C_r \le \frac{1}{2^{l+1}}$. By the induction hypothesis and the assumption that the left sub-tree of $r_T$ is full, $T'$ has $l$ $x$-nodes on its right-path. Therefore, by Lemma \ref{lem:cv_calc}, we have  $D = \frac{1}{2^l} - C_r \ge \frac{1}{2^{l+1}}$, as desired.

\underline{Case 4: The root of $T$ is a $y$-node, $l<m$ and the left sub-tree of the root is \textit{not} a full $x$-tree.}

Assume that there exists an $x$-node in the right-most path whose left-sub tree is a full right $y$-path. Then, we can create a new tree by moving nodes from the latter left sub-tree to the left sub-tree of the root, to create a full right $x$-path in the left sub-tree of the root. This does not decrease the test error reduction. Then we are in the previous case where the left sub-tree of the root is a full $x$-path.

We are left with the case in which there is no $x$-node in the right-most path with full left-sub tree. In this case, we can remove the root and its left sub-tree and replace the tree with its right sub-tree. If there is no right sub-tree then we can replace the tree with a single $x$-node. The total test error reduction will not decrease. To see this, note that there is no error reduction in left sub-trees of $x$-nodes (because they are not full) and the error reduction in the left sub-trees of $y$-nodes can only increase because we increase the probability to get to each node. 

Finally, the error reduction of the right-path can only increase as well. Note that by induction, WLOG the nodes on the right-path, not including the root, are $x$-nodes. If the length of the right-path is less than $l$, then every node has zero error reduction and this will not change after removal of the root and its left sub-tree. If its length is $l$, then the only non-zero error reduction is at the last node in the path, denoted by $v$. This error reduction is $\underbrace{\frac{1}{2^{l-1}}}_{\text{p(v)}}\underbrace{\frac{1}{2^{m}}}_{\text{C(v)}} = \frac{1}{2^{l+m-1}}$, by Lemma \ref{lem:cv_calc} part 2. After the removal of nodes, its error reduction is $\underbrace{\frac{1}{2^{l-2}}}_{\text{p(v)}}\underbrace{\frac{1}{2^{m+1}}}_{\text{C(v)}} = \frac{1}{2^{l+m-1}}$, and therefore does not change. If its length is $l+1$, i.e.,it has $l$ $x$-nodes, then the total error reduction before removal is $$\frac{1}{2^{l-1}}\frac{1}{2^{m}} + \frac{1}{2^{l}}\left(\frac{1}{2} - \frac{1}{2^{m-1}}\right) = \frac{1}{2^{l+1}}$$
and after
$$\frac{1}{2^{l-2}}\frac{1}{2^{m+1}} + \frac{1}{2^{l-1}}\left(\frac{1}{2} - \frac{1}{2^{m}}\right) = \frac{1}{2^{l}}$$
and thus the error reduction increases, which finishes the proof. 

\section{Comparison with BestFirst}

\begin{figure}[t]
	\begin{algorithm}[H]
		\caption{$\text{BestFirst}_{\md}(t)$}
		\label{alg:bestfirst}
		\begin{algorithmic}
			\STATE Initialize $T$ to be a single leaf labeled by the majority label with respect to $\md$.
			\STATE \textbf{while} $T$ has less than $t$ nodes: 
			\begin{ALC@g}
				\STATE $\Delta_{best} \leftarrow 0$.
				\STATE \textbf{for} each pair $l \in \ell(T)$ and $i \in F_l$:
				\begin{ALC@g}
					\STATE $\Delta \leftarrow H(q(l)) -(1-\tau_i) H(q(l_0)) - \tau_i H(q(l_1))$.
					\STATE \textbf{if} $\Delta \ge \Delta_{best}$ \textbf{then}:
					\begin{ALC@g}
						\STATE $\Delta_{best} \leftarrow \Delta$; $l_{best} \leftarrow l$; $i_{best} \leftarrow i$.
					\end{ALC@g}
				\end{ALC@g}
				\STATE $T \leftarrow T(l_{best},i_{best})$.
			\end{ALC@g}
			\STATE \textbf{return} $T$
		\end{algorithmic}
	\end{algorithm}
	\caption{BestFirst algorithm.}
	\label{fig:bestfirst}
\end{figure}

Here we show that BestFirst chooses the tree in Figure \ref{fig:best_first_tree}. Figure \ref{fig:bestfirst} shows the pseudo-code for BestFirst, where we use the same notation as in Section \ref{sec:preliminaries}. By parts 1 and 2 at the end of the proof of Proposition \ref{prop:topdown_bt}, BestFirst will first grow the right-path of $x$-nodes. Note that in part 2 we show that a node on the right-path has larger weighted gain than its parent. Since it is a deeper node it also has a larger gain. However, the gain of the left childs of the nodes on the right-path are equal. Therefore, it can choose to grow the left child of the second $x$-node on the right-path. Then, by part 3 at the end of the proof of Proposition \ref{prop:topdown_bt}, it will expand the left sub-tree of the second $x$-node before expanding the left child of the first $x$-node. Here again, we use the fact that if the weighted gain of a child is larger than the weighted gain of its parent, then also its gain is larger than the gain of its parent.

\section{Correctness of Dynamic Programming Formulas}

First, we will show by induction on $t$ that there is an optimal tree which all of its nodes are variables in the DNF. Therefore, we do not need to consider variables not in the DNF in the calculation of optimal test error. In the induction step, if by contradiction there is an optimal tree $T$ with a root node that is not in the DNF, we can replace it with one of its sub-trees $T_R$ or $T_L$ (whose nodes are all in the DNF by the induction hypothesis) without increasing the test error. This follows since the test error of $T$ is a convex combination of the test errors of $T_R$ and $T_L$, by the independence assumption.

We initialize $OPT(F,0)$ to be the test error of a single root node with DNF $F$ and $OPT(\emptyset,t) = 0$. Furthermore, in \eqref{eq:product_exp_formula} we set $S_i(F,t,j,c) = 1$ if $c$ does not have a variable with distribution $Bernoulli(p_i)$. In both formulas of uniform and bernoulli distributions, the dynamic programming algorithm chooses the feature and number of nodes for left and right sub-trees that result in the lowest test error. The optimal test error of the tree is calculated by the sum over both sub-trees of the probability to reach the sub-tree times the optimal test error of the sub-tree. The optimal sub-tree of size $j$ is the tree with best test error among all trees of size $j$ with ground-truth DNF which is the DNF of the root after conditioning on the variable in the root. Thus, if this root variable is 0, the term which contains the root variable is removed from the root DNF. Otherwise if it is 1, then the root variable is removed from the DNF.

Now we proceed to prove the correctness of the formulas. Consider an optimal tree $T$ over $\md$ with a ground truth DNF $F$ and consider WLOG its left sub-tree $T'$ with $t$ internal nodes and the DNF $F'$ which is the DNF $F$ after conditioning on the assignments from the root of $T$ to the root of $T'$ (which is the left child of the root of $T$). Since $\md$ is a product distribution and by optimality of $T$, $T'$ is the optimal tree with $t$ internal nodes over $\md$ with a ground-truth DNF $F'$ (i.e., the same marginal distribution on $\mx$ but realizable with $F'$ and not $F$). Furthermore, the test error of $T$ is the lowest among all trees with different splitting variables at the root and all possible number of internal nodes for the left and right sub-trees.

\section{Finite Sample Experiments Details}

We implemented both TopDown and BestFirst for finite samples. We considered both uniform and product distirbutions with ground DNF with $n=38$ and 10 terms of sizes $[3,3,4,5,3,4,3,5,4,4]$. We experimented with product distributions as defined in Section \ref{sec:product_exps} with $(p_1,p_2) \in \{(0.4,0.6), (0.3,0.7)\}$. We chose at random half of the variables to be $Bernoulli(p_1)$ and the other $Bernoulli(p_2)$. We experimented with training set sizes $|S|$ of $[50,100,1000,2000,3000,4000,10000, 20000,50000]$ and test set size 10000. For each training set size we performed 10 experiments with a different sampled training set.

	
	

\end{document}